\documentclass{article}

 \usepackage[preprint]{neurips_2026}

\usepackage[utf8]{inputenc} 
\usepackage[T1]{fontenc}    
\usepackage{hyperref}       
\usepackage{url}            
\usepackage{booktabs}       
\usepackage{amsfonts}       
\usepackage{nicefrac}       
\usepackage{microtype}      
\usepackage{xcolor}         

\usepackage{makecell}
\usepackage{multirow}
\usepackage[table]{xcolor}
\usepackage{wrapfig}
\usepackage{adjustbox}
\usepackage{enumitem}

\usepackage{amssymb} 
\usepackage{amsthm}
\newtheorem{proposition}{Proposition}
\usepackage{amsmath}
\usepackage{tikz}
\usetikzlibrary{arrows.meta, positioning, shapes.multipart}
\usepackage[skip=4pt]{caption}

\usepackage{cleveref}

\crefformat{section}{\S#2#1#3}
\Crefformat{section}{\S#2#1#3}
\crefformat{subsection}{\S#2#1#3}
\Crefformat{subsection}{\S#2#1#3}
\crefformat{subsubsection}{\S#2#1#3}
\Crefformat{subsubsection}{\S#2#1#3}

\crefname{figure}{Fig.}{Figs.}
\Crefname{figure}{Fig.}{Figs.}
\crefname{table}{Tab.}{Tabs.}
\Crefname{table}{Tab.}{Tabs.}
\crefname{equation}{Eq.}{Eqs.}
\Crefname{equation}{Eq.}{Eqs.}
\newcommand{\appxref}[1]{Appdx.~\ref{#1}}

\usepackage{xspace}  
\newcommand{\modelname}{\modelnamens\xspace}
\newcommand{\modelnamens}{NARA}
\newcommand{\maskedmodeling}{\maskedmodelingns\xspace}
\newcommand{\maskedmodelingns}{MGSM}

\title{\modelname: Anchor-Conditioned Relation-Aware Contextualization of Heterogeneous Geoentities}

\author{
Jina Kim\textsuperscript{1} \quad 
Gengchen Mai\textsuperscript{2} \quad
Lingyi Zhao\textsuperscript{3} \quad
Khurram Shafique\textsuperscript{3} \quad
Yao-Yi Chiang\textsuperscript{1} \\
\\
\textsuperscript{1}Department of Computer Science and Engineering, University of Minnesota, Minnesota, USA \\
\textsuperscript{2}Department of Geography and the Environment, University of Texas at Austin, Texas, USA \\
\textsuperscript{3}Novateur Research Solutions, Virginia, USA \\
\\
\texttt{\{kim01479, yaoyi\}@umn.edu}, \texttt{gengchen.mai@austin.utexas.edu}, \\ \texttt{\{lzhao, kshafique\}@novateur.ai} \\
}

\begin{document}

\maketitle

\vspace{-1em}
\begin{abstract}
Geospatial foundation models have primarily focused on raster data such as satellite imagery, where self-supervised learning has been widely studied. Vector geospatial data instead represent the world as discrete geoentities with explicit geometry, semantics, and structured spatial relations, including metric proximity and topological relationships. These relations jointly determine how entities interact within space, yet existing representation learning methods remain fragmented, often restricted to specific geometry types or partial spatial relations, limiting their ability to capture unified spatial context across heterogeneous geoentities. We propose \textbf{\modelname} (\textbf{N}eural \textbf{A}nchor-conditioned \textbf{R}elation-\textbf{A}ware representation learning), a self-supervised framework for vector geoentities. \modelname learns context-dependent representations by jointly modeling semantics, geometry, and spatial relations within a unified framework and captures relational spatial structure beyond proximity alone, enabling rich contextualized representations across heterogeneous geoentities of points, polylines, and polygons. Evaluation on building function classification, traffic speed prediction, and next point-of-interest recommendation shows consistent improvements over prior methods, highlighting the benefit of unified relational modeling for vector geospatial data. 
\end{abstract}
\vspace{-1em}

\section{Introduction}
\vspace{-1em}
Vector geographic datasets such as OpenStreetMap (OSM)~\cite{osm} and Overture Maps~\cite{overturemaps} provide large-scale, human-curated abstractions of the environment as collections of discrete \textbf{geoentities}, including roads, buildings, and amenities. Unlike sensed raster data (e.g., satellite imagery), vector geographic datasets reflect intentional selection: important landmarks are preserved, and entities are encoded at different geometric granularities (e.g., polygons vs.\ points) based on their functional roles~\cite{longley2015geographic}. Consequently, geoentities are irregular and heterogeneous (with diverse semantic descriptions and geometry types) and organized through structured spatial relations rather than fixed-grid neighborhoods, making raster-based self-supervised learning (SSL) approaches~\cite{noman2024satemae++,mai2023csp,liu2026gair,klemmer2025satclip,szwarcman2025prithvi} not directly applicable.

Instead, vector data can be viewed with a language analogy: a map is a collection of discrete entities, analogous to tokens in a document, where meaning arises from context. In natural language processing, models such as BERT~\cite{devlin-etal-2019-bert} learn contextualized representations by combining token semantics with positional information (e.g., ``bank'' in ``river bank'' vs.\ ``Bank of America''). In the same spirit, the meaning of geoentities with identical names, types, and geometries can vary depending on their surrounding neighborhood. For example, a Starbucks \textit{inside a mall} exhibits different visiting patterns from one \textit{in an airport} or \textit{along Fifth Avenue} in New York City; additionally, a Starbucks represented as a polygon rather than a point may indicate expanded functionality.

This notion of context extends beyond one-dimensional sentences and two-dimensional images. A geoentity is characterized by attributes and geometry, and by spatial relationships to others, including metric (distance, orientation) and topological (containment, adjacency) relations. These jointly shape meaning, and positional structures alone, such as token order in NLP~\cite{devlin-etal-2019-bert} or patch/grid positions in vision transformers~\cite{dosovitskiy2021an}, are therefore insufficient. A unified mechanism that captures both metric and topological relations is needed to properly contextualize geoentities during representation learning.

Existing SSL methods for vector geospatial data are tailored to individual geoentities or geometry types. Previous work typically develops isolated frameworks, e.g., for learning representations of point-of-interest (POI)~\cite{mai2020multi,li-etal-2022-spabert,cheng2025poi}, road networks~\cite{zhang2023road,zhou2024road}, or regional polygons~\cite{li2023urban}, each with task-specific architectures. Consequently, these methods lack the flexibility to jointly represent heterogeneous entities within a single representation space. Recent efforts to incorporate multiple types are often constrained to fixed data or application assumptions, failing to generalize across datasets with diverse attributes or unseen combinations of entities~\cite{tempelmeier2021geovectors,balsebre2024city,yang2025hygmap}.

To address these limitations, we propose \textbf{\modelname} (\textbf{N}eural \textbf{A}nchor-conditioned \textbf{R}elation-\textbf{A}ware representation learning), a self-supervised framework that \textit{learns how spatial relations should influence semantic representations}, enabling spatial-aware, adaptive contextualization of heterogeneous geoentities. \modelname integrates a geometry encoder into a unified training framework and captures both distance- and topology-based interactions across geometry types. We further introduce relation-aware spatial regularization to encourage entities sharing similar spatial relations to a common reference entity, termed an anchor, to learn consistent representations beyond simple spatial proximity. By combining objectives over semantic features, metric neighborhoods, and relation-aware interactions, \modelname learns a unified representation space that generalizes across points, polylines, and polygons, as well as across diverse data sources. We evaluate \modelname across a diverse set of tasks that traditionally rely on ad hoc assumptions and task-specific architectures: building function classification, traffic speed prediction, and next POI recommendation. The results demonstrate robust, meaningful representations that advance the development of geospatial foundation models for vector data.
\vspace{-1em}
\section{Related Work}
\vspace{-1em}
Existing representation learning methods for geospatial vector data are typically designed for a single geometry type (e.g., points)~\cite{klemmer2025satclip,liu2026gair}, rely on specific data sources or attributes (e.g., place categories)~\cite{balsebre2024city}, and define context using fixed similarity notions such as spatial proximity or predefined functional relations~\cite{chen2025self}. This limits their ability to capture rich spatial interactions and generalize across heterogeneous geoentities.
Point-based methods encode semantic attributes with coordinate embeddings~\cite{mai2020multi,li-etal-2022-spabert}, but lack spatial extent and topology. Line-based methods, largely focused on road networks, learn from graph connectivity~\cite{grover2016node2vec,wang2019learning,jepsen2020relational,zhang2023road,zhou2024road} but ignore geometric shape and spatial relations with surrounding entities, and are difficult to generalize beyond road networks. Polygon-based methods either rasterize shapes for visual encoders~\cite{mai2023towards,balsebre2024city,li2023urban}, losing geometric precision and scale, or encode coordinates only (e.g.,~\cite{mai2023towards}), and model entities in isolation without inter-entity context.
Geometry-type-agnostic encoders~\cite{siampou2025polyvec,chu2026geo2vec} unify representations across geometry types but focus solely on geometry and require downstream adaptation to capture semantic, metric, or topological relationships. More general frameworks in urban computing~\cite{tempelmeier2021geovectors,balsebre2024city,yang2025hygmap} contextualize heterogeneous entities, yet rely on proximity or predefined functional dependencies from available data schemas (e.g., OSM relations), rather than learning how topological relations shape functional similarity. As a result, semantics and spatial structure are treated separately instead of jointly as complementary information sources. \modelname addresses these limitations by jointly encoding semantic attributes and geometry across heterogeneous entity types. In addition, prior work has incorporated spatial autocorrelation, one of the critical spatial principles, into general-purpose AI models through strategies such as training data sampling~\cite{yan2017itdl}, negative sampling~\cite{mai2019relaxing,huang2022estimating}, and specialized loss functions~\cite{jean2019tile2vec,wang2024mc}. However, these approaches focus solely on metric relations and are limited to satellite imagery~\cite{jean2019tile2vec,li2023rethink} or point-based entities~\cite{mai2019relaxing,huang2022estimating,wang2024mc}. In contrast, \modelname models spatial autocorrelation across geoentities with diverse geometry types and explicitly captures how metric and topological relations jointly shape contextual representations of vector geoentities.

\vspace{-1em}
\section{Spatial Contextualization Principles}\label{sec:theory}
\vspace{-1em}

\textbf{Preliminaries}
Given a set of $n$ geoentities $\mathcal{G} = \{v_1, \dots, v_n\}$, each entity $v_i$ is associated with semantic attributes $s_i$ (e.g., category, name, or text description) and geometry $g_i$ (i.e., points, lines, or polygons). Our goal is to learn a representation function $f:\mathcal{G}\rightarrow\mathbb{R}^d$ that embeds each geoentity into a shared $d$-dimensional space while preserving semantic attributes, geometry, and spatial structure, including both metric proximity and topological relations.

\textbf{Spatial Dependencies}
Geographic systems exhibit structured spatial dependencies. Tobler's First Law states that nearby entities tend to be more related than distant ones~\cite{tobler1970computer,tobler2004first}, motivating \emph{spatial autocorrelation}, where similarity decays with distance, often characterized by the semivariogram~\cite{bachmaier2011variogram}:
\begin{equation}
  \gamma(b) \;=\; \frac{1}{2}\,\mathbb{E}\!\left[\left(Z(v_i)-Z(v_j)\right)^2 \,\middle|\, d_{ij}=b\right],
  \label{eq:variogram_def}
\end{equation}
where $Z(v_i)$ denotes a latent attribute associated with geoentity $v_i$, and $d_{ij}$ is the geographic distance between $v_i$ and $v_j$. Smaller $\gamma(b)$ indicates stronger similarity at separation $b$, and $\gamma(b)$ typically increases with distance in a nonlinear, saturating manner. For vector geoentities, this suggests that nearby entities provide stronger contextual information and that contextual influence should decay with distance in a data-dependent way. Beyond metric proximity, spatial reasoning also depends on \emph{topological} structure (e.g., containment and adjacency)~\cite{egenhofer1990mathematical,randell1992spatial,regalia2019computing}. Such relations can override proximity: entities sharing meaningful topology (e.g., stores within the same mall or businesses along the same avenue) may exhibit similar functions despite being geographically separated. Let $a$ denote a reference (anchor) entity and $\text{rel}(v_i, a) \in \mathcal{R}$ the topological relation between $v_i$ and $a$. We expect entities sharing the same relation to a common anchor to exhibit correlated latent properties:
\begin{equation}
  \operatorname{Cov}\!\left(Z(v_i), Z(v_j) \,\middle|\, \text{rel}(v_i,a)=\text{rel}(v_j,a)\right) \;>\; 0.
  \label{eq:relational_autocorr}
\end{equation}

Here, \Cref{eq:relational_autocorr} should be viewed as a modeling assumption motivated by geographic spatial organization rather than a theoretical guarantee. Therefore, motivated by \Cref{eq:variogram_def} and \Cref{eq:relational_autocorr}, we model functional similarity as jointly governed by metric distance and relational agreement:
\begin{equation}
\operatorname{sim}(v_i,v_j)
\propto
k(d_{ij})
\cdot
\mathbb{E}_{a \sim p(a \mid v_i,v_j)}
\!\left[
\phi\!\big(\text{rel}(v_i,a), \text{rel}(v_j,a)\big)
\right],
\label{eq:topo_modulated_kernel}
\end{equation}
where $k(\cdot)$ models distance decay and $\phi(\cdot)$ captures topological modulation. The expectation is taken over anchors relevant to the pair $(v_i,v_j)$, allowing entities to participate in multiple relational contexts.~\Cref{eq:topo_modulated_kernel} is not intended as a probabilistic generative model, but rather as a conceptual formulation motivating the representation objectives introduced in~\Cref{sec:method}. This perspective suggests two desirable properties for contextual representation learning over heterogeneous geoentities. \textbf{Distance-decay:} contextual similarity decreases with geographic distance in a nonlinear manner. \textbf{Topological modulation:} entities sharing similar topological relations to a common anchor should exhibit enhanced similarity beyond metric proximity alone. These principles motivate the design of \modelname's training objectives, which operationalize distance-decay and topological modulation through distance-aware contrastive learning and relation-conditioned semivariogram regularization.
\vspace{-1em}
\section{Method: \modelname Framework}\label{sec:method}
\vspace{-1em}
We propose \modelname, a self-supervised framework for learning how spatial relations derived from geometry guide the learning of semantic representations (\Cref{fig:architecture}).

\begin{figure}[h]
 \vspace{-.5em}
  \centering
  \includegraphics[width=\textwidth]{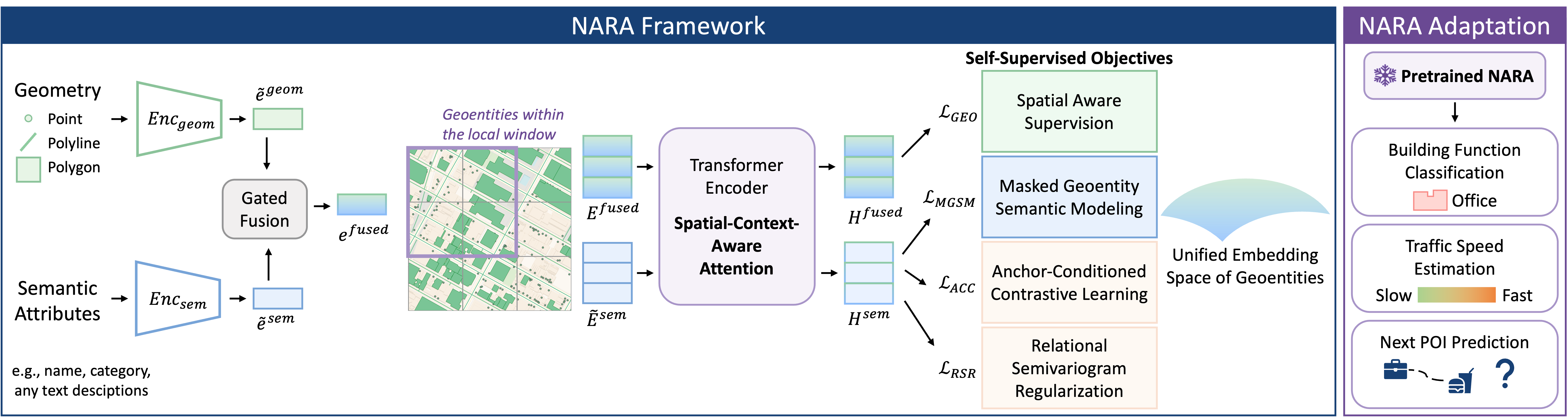}
  \caption{Overview of the proposed self-supervised framework, \modelname.}
  \label{fig:architecture}
\vspace{-.5em}
\end{figure}
\vspace{-.5em}
\subsection{Semantic and Geometric Embeddings of Geoentity}\label{sec:geo_encoding}\vspace{-.5em}
Each geoentity $v_i \in \mathcal{G}$ is defined by semantic attributes $s_i$, represented as a sentence (e.g., composed from name, category, or other textual descriptors), and geometry $g_i$. \modelname encodes $s_i$ using a language encoder (e.g., BERT~\cite{devlin-etal-2019-bert}) and $g_i$ using a geometry encoder (e.g., Poly2Vec~\cite{siampou2025polyvec}):
\begin{equation}
e_i^{\text{sem}} = \mathrm{Enc}_{\text{sem}}(s_i), \quad
e_i^{\text{geom}} = \mathrm{Enc}_{\text{geom}}(g_i),
\end{equation}
where $e_i^{\text{sem}} \in \mathbb{R}^{d_s}$ and $e_i^{\text{geom}} \in \mathbb{R}^{d_g}$. \modelname projects both embeddings into a shared $d$-dimensional space, yielding $\tilde{e}_i^{\text{sem}}$ and $\tilde{e}_i^{\text{geom}}$ and fuses them via an adaptive gate:
\begin{equation}
\alpha_i = \sigma\left(f_{\text{gate}}([\tilde{e}_i^{\text{sem}} ; \tilde{e}_i^{\text{geom}}])\right), \quad
e_i = (1 - \alpha_i)\tilde{e}_i^{\text{sem}} + \alpha_i\tilde{e}_i^{\text{geom}}
\label{eq:fused_emb}
\end{equation}
where $\alpha_i \in [0,1]$ modulates the contribution of geometric information. We initialize the gating bias negatively so that training prioritizes semantic information and gradually incorporates geometry.

\vspace{-.5em}\subsection{Masked Geoentity Semantic Modeling}\label{sec:mgsm}\vspace{-.5em}
\modelname's core self-supervised objective is Masked Geoentity Semantic Modeling (\maskedmodeling). Given a target geoentity with masked semantic embedding, the goal is to predict the masked semantic attributes from its geometry and neighboring entities. Following~\Cref{sec:theory}, this task requires modeling how semantic similarity varies with spatial proximity and topological relations. To achieve this, we propose a Transformer encoder with spatial-context-aware attention over local geoentities.

We define local neighborhoods by partitioning a target area (e.g., NYC) into overlapping spatial windows $\{\mathcal{W}_\ell\}_{\ell=1}^{L}$ (e.g., $500\,\mathrm{m} \times 500\,\mathrm{m}$) with a stride, analogous to how Vision Transformers partition an image into fixed patches~\cite{dosovitskiy2021an}. Each window contains geoentities that intersect its spatial extent:
\begin{equation}
\mathcal{N}(\ell) = \{ v_i \in \mathcal{G} \mid g_i \cap \mathcal{W}_\ell \neq \varnothing \}.
\end{equation}
This ensures that each entity appears in multiple neighborhoods, preserving spatial continuity. Within each window, we randomly sample a subset of geoentities $\mathcal{M}$ to mask, replacing their projected semantic embeddings with a learned \texttt{[MASK]} token while retaining their geometric embeddings:
\begin{equation}
\check{e}_i^{\text{sem}} = 
\begin{cases}
\text{\texttt{[MASK]}} & \text{if } v_i \in \mathcal{M} \\
\tilde{e}_i^{\text{sem}} & \text{otherwise.}
\end{cases}
\label{eq:mask_sem_emb}
\end{equation}
To reconstruct masked semantic attributes, the proposed spatial-context-aware multi-head attention learns how spatial relations among neighboring entities modulate the strength of contextualization, analogous to how word order shapes meaning in language models. Queries and keys are computed from the fused geoentity embeddings $E^{\text{fused}} = [e_1; \ldots; e_n]$, where masked entities use the \texttt{[MASK]} token in place of their semantic component (\Cref{eq:fused_emb}). Here, \modelname follows the same principle as prior models~\cite{li-etal-2022-spabert,li-etal-2023-geolm,balsebre2024city}, which incorporate spatial information into attention by injecting coordinate embeddings into the representations used to compute attention, enabling the model to capture point-level metric relations. \modelname generalizes this idea to heterogeneous vector geometries by replacing coordinate embeddings with geometry embeddings encoding points, polylines, and polygons. These geometry embeddings provide the information needed to learn location- and shape-dependent interactions across geometry types. The spatial-context-aware attention supervision introduced next (\Cref{sec:geom_loss}) guides the contextual embeddings produced by the same attention map to preserve pairwise metric and topological structure, encouraging attention to reflect spatial context. Values are computed from the masked semantic embeddings $\check{E}^{\text{sem}} = [\check{e}_1^{\text{sem}}; \ldots; \check{e}_n^{\text{sem}}] \in \mathbb{R}^{n \times d}$:
\begin{equation}
Q = E^{\text{fused}}W^Q,\quad K = E^{\text{fused}}W^K,\quad
A = \mathrm{Softmax}(QK^\top/\sqrt{d_h}),
\label{eq:attention}
\end{equation}
\begin{equation}
V_{\text{sem}} = \check{E}^{\text{sem}}W^{V_{\text{sem}}},\quad
H_{\text{sem}} = A V_{\text{sem}},
\end{equation}
where $d_h$ is the dimension of the attention heads. By using $\check{E}^{\text{sem}}$ rather than the original $\tilde{E}^{\text{sem}}$ as values, masked entities contribute only the \texttt{[MASK]} token to contextualization, preventing self-leakage of the target semantic embedding. The contextual representation $h_i^{\text{sem}}$ (row $i$ of $H_{\text{sem}}$) of each masked geoentity is passed to a reconstruction head to predict the original semantic embedding:
\begin{equation}
\hat{e}_i^{\text{sem}} = f_{\text{rec}}(h_i^{\text{sem}}).
\end{equation}
The reconstruction is supervised by a contrastive objective where, for each masked entity $v_i \in \mathcal{M}$, the predicted embedding $\hat{e}_i^{\text{sem}}$ is pulled toward the original semantic embedding $e_i^{\text{sem}}$ and pushed against the semantic embeddings of other entities within the same spatial window:
\begin{equation}
\mathcal{L}_{\text{\maskedmodeling}} =
- \sum_{i \in \mathcal{M}}
\log
\frac{
\exp\!\left(\operatorname{sim}(\hat{e}_i^{\text{sem}}, e_i^{\text{sem}})/\tau_{\text{\maskedmodeling}}\right)
}{
\sum_{j \in \mathcal{P}(i)}
\exp\!\left(\operatorname{sim}(\hat{e}_i^{\text{sem}}, e_j^{\text{sem}})/\tau_{\text{\maskedmodeling}}\right)
},
\end{equation}
where $\mathcal{P}(i)$ denotes entities within the same spatial window, excluding those with identical semantic embeddings to $v_i$. $\operatorname{sim}(\cdot)$ is the cosine similarity and $\tau_{\text{\maskedmodeling}} > 0$ is a temperature parameter. 
\vspace{-.5em}
\subsection{Spatial-Aware Attention Supervision}
\label{sec:geom_loss}\vspace{-.5em}
Accurate \maskedmodeling requires contextual representations to preserve meaningful spatial structure. For example, consider a point entity with a masked semantic attribute: being contained within a ``commercial building'' polygon suggests a ``shop,'' whereas being adjacent to the same polygon might suggest a ``parking lot.'' To this end, we introduce spatial-aware supervision on the fused contextual embeddings $H_{\text{fused}}$, training them to predict pairwise distance and topological relations. This encourages the fused embeddings to capture the information needed for spatial-aware attention and to generalize across relational contexts. The idea is that incorporating semantic context allows the model to better distinguish relations (e.g., a shop vs. a parking lot), such as containment versus adjacency (as opposed to geometry alone, e.g., Poly2Vec~\cite{siampou2025polyvec}), enabling attention to exploit both spatial and functional context rather than relying solely on geometry.
\modelname computes $H_{\text{fused}}$ using the same attention map $A$ as in~\Cref{eq:attention}, with values projected from the geoentity embeddings $E^{\text{fused}}$:
\begin{equation}
V_{\text{fused}} = E^{\text{fused}}W^{V_{\text{fused}}}, \quad H_{\text{fused}} = AV_{\text{fused}}.
\end{equation}

During training, we sample random pairs to cover diverse spatial configurations, as well as hard pairs that capture informative topological relations, such as intersection and adjacency. Hard pairs are excluded from the random pool to avoid overlap. For each sampled pair $(v_i, v_j)$, we predict the pairwise distance $\hat{d}_{ij}$ and topological relation $\hat{r}_{ij}$:
\begin{equation}
\hat{d}_{ij} = f_{\text{dist}}([h_i^{\text{fused}} ; h_j^{\text{fused}}]), 
\quad
\hat{r}_{ij} = f_{\text{topo}}([h_i^{\text{fused}} ; h_j^{\text{fused}}]).
\end{equation}
We treat topological relations as symmetric predicates to focus \modelname on contextual similarity rather than explicit geometric reasoning. For example, “a restaurant within a mall” and “a mall contains a restaurant” both indicate participation in a shared functional context (see~\appxref{appdx:topo}). Collapsing inverse relations into a shared predicate also yields a simpler and denser supervision signal across heterogeneous geometry types. The overall geometric loss combines mean squared error for distance prediction ($\mathcal{L}_{\text{dist}} = \|\hat{d}_{ij} - d_{ij}\|^2$) and cross-entropy for topology classification ($\mathcal{L}_{\text{topo}} = \text{CE}(\hat{r}_{ij}, r_{ij})$), weighted by $\alpha_{\text{topo}}$ and $\alpha_{\text{dist}}$:
\begin{equation}
\mathcal{L}_{\text{GEO}} =
\alpha_{\text{topo}} \mathcal{L}_{\text{topo}} +
\alpha_{\text{dist}} \mathcal{L}_{\text{dist}}.
\end{equation}
This supervision encourages $H^{\text{fused}}$, which is computed using the same attention map $A$ as MGSM, to preserve spatial proximity and topological structure across geometry types. As a result, the shared attention mechanism is trained to support contextualization that reflects metric and relational context.
\vspace{-.5em}

\vspace{-.5em}
\subsection{Relation-Conditioned Modeling}
\label{sec:relational}\vspace{-.5em}
\maskedmodeling provides contextualization, and geometry-aware attention supervision ensures that embeddings encode information required for spatial-aware attention. However, they do not explicitly enforce that topological relationships are \emph{used} during contextualization. To address this, we introduce two complementary objectives that promote relation-conditioned contextualization (\Cref{eq:topo_modulated_kernel}). The first encourages \modelname to consider topological structure by pulling sibling entities sharing the same relation to an anchor closer in the contextual semantic embedding space, while pushing non-siblings apart, with attraction modulated by geographic distance. The second objective, inspired by Kriging for spatial prediction~\cite{lin2020deep}, enforces consistency of spatial autocorrelation within a neighborhood: sibling pairs must follow the same distance-dependent similarity trend as non-sibling pairs, while remaining more similar at the same distance.

Within each spatial window $\mathcal{W}_\ell$, polygons and polylines serve as anchors, where $\mathcal{A} \subseteq \mathcal{G}$ denotes the set of anchor entities. Geoentities contained within a polygon or located within a 30-meter buffer of a polyline belong to that anchor's space. Let $\text{rel}(v_i, a) \in \mathcal{R}$ denote the topological relation between entity $v_i$ and anchor $a$, with $\text{rel}(v_i, a) = \varnothing$ if no relation is defined. Each unique (anchor, relation) pair $(a, r)$ defines a \emph{sibling group}:
\begin{equation}
\mathcal{S}_{a,r} = \left\{ v_i \in \mathcal{N}(\ell) \mid \text{rel}(v_i, a) = r \neq \varnothing \right\},
\end{equation}
comprising point or polygon entities sharing topological relation $r \in \mathcal{R}$ to anchor $a$ within the same window. We denote the collection of all non-empty sibling groups within window $\mathcal{W}_\ell$ as $\mathcal{S}^*(\ell) = \{ \mathcal{S}_{a,r} \mid a \in \mathcal{A},\, r \in \mathcal{R},\, \mathcal{S}_{a,r} \neq \varnothing \}$, and the geometry type of entities in $\mathcal{S}_{a,r}$ as $t_{a,r} \in \{\text{point}, \text{polygon}\}$. The per-entity sibling set of $v_i$ with respect to anchor $a$ is the sibling group containing $v_i$, excluding itself:
\begin{equation}
\mathcal{S}_a(v_i; \ell) = \mathcal{S}_{a,\, \text{rel}(v_i,a)} \setminus \{v_i\}.
\end{equation}
Since an entity may belong to multiple sibling groups across different anchors, the full sibling set aggregates across all anchors of $v_i$:
\begin{equation}
\mathcal{S}(v_i; \ell) = \bigcup_{a \in \mathcal{A}_{v_i}} \mathcal{S}_a(v_i; \ell), \quad
\mathcal{A}_{v_i} = \left\{ a \in \mathcal{A} \mid \text{rel}(v_i, a) \neq \varnothing \right\}.
\end{equation}
Entities in $\mathcal{S}(v_i; \ell)$ share at least one sibling group and, by \Cref{eq:relational_autocorr}, are expected to exhibit positively correlated latent properties, motivating the two objectives below.

\vspace{-.5em}\paragraph{Anchor-Conditioned Contrastive Learning.}
We enforce relation-conditioned similarity by pulling sibling pairs together and pushing non-siblings apart, with attraction modulated by geographic distance. For each unmasked entity $v_i$, comparisons are restricted to same-type entities (let $\eta(v_i)$ denote the geometry type of $v_i$):
\begin{equation}
\mathcal{E}(v_i; \ell) = \left\{ v_j \in \mathcal{N}(\ell) \setminus \mathcal{M} \mid \eta(v_j) = \eta(v_i),\; v_j \neq v_i \right\}.
\end{equation}
Each sibling pair $(v_i, v_j)$ is weighted by distance decay:
\begin{equation}
w_{ij} = \exp\!\left( -\frac{d_{ij}}{\lambda} \right),
\end{equation}
where $\lambda > 0$ is a distance decay scale. The loss is aggregated over all windows in a training batch $\mathcal{B}$:
\begin{equation}
\mathcal{L}_{\text{ACC}} = \frac{1}{|\mathcal{B}|} \sum_{\ell \in \mathcal{B}} \frac{1}{|\mathcal{Q}(\ell)|} \sum_{v_i \in \mathcal{Q}(\ell)} \frac{-1}{W_i} \sum_{v_j \in \mathcal{S}(v_i;\ell)} w_{ij} 
\log \frac{\exp(\operatorname{sim}(h_i^{\text{sem}}, h_j^{\text{sem}})/\tau_{\text{ACC}})}{\sum_{v_k \in \mathcal{C}(v_i;\ell)} \exp(\operatorname{sim}(h_i^{\text{sem}}, h_k^{\text{sem}})/\tau_{\text{ACC}})},
\end{equation}
where $\mathcal{Q}(\ell) = \left\{ v_i \in \mathcal{N}(\ell) \setminus \mathcal{M} \mid \mathcal{S}(v_i; \ell) \neq \varnothing \right\}$ denotes unmasked entities with at least one sibling, $W_i = \sum_{v_j \in \mathcal{S}(v_i;\ell)} w_{ij}$ is the normalization factor, $\mathcal{C}(v_i; \ell) = \mathcal{S}(v_i; \ell) \cup \left( \mathcal{E}(v_i; \ell) \setminus \mathcal{S}(v_i; \ell) \right)$ is the contrast set comprising siblings (positives) and same-type non-siblings (negatives), and $\tau_{\text{ACC}} > 0$ is a temperature parameter. This formulation allows entities to belong to multiple relational contexts, with distance-aware weighting ensuring proximate siblings exhibit stronger similarity than distant siblings or non-siblings at the same distance.

\vspace{-.5em}\paragraph{Relational Semivariogram Regularization.}
To enforce spatial autocorrelation consistency, we introduce semivariogram-based regularization over $\ell_2$-normalized contextual semantic embeddings $h_i^{\text{sem}}$. Using cosine distance as a proxy for squared Euclidean distance (see~\appxref{appdx:semivariogram}), the empirical semivariance over a pair set $\Pi$ is:
\begin{equation}
\hat{\gamma}(\Pi) = \frac{1}{|\Pi|} \sum_{(v_i, v_j) \in \Pi} \left( 1 - \operatorname{sim}(h_i^{\text{sem}}, h_j^{\text{sem}}) \right).
\end{equation}
We compute two semivariance estimates conditioned on distance bin $b$: the global semivariance $\gamma_{\text{glob}}^{(t)}(b)$ over non-sibling pairs of geometry type $t$ (relation-agnostic baseline), and the relational semivariance $\gamma_{\text{rel}}^{(a,r)}(b)$ over sibling pairs within sibling group $\mathcal{S}_{a,r}$. Both are evaluated separately per geometry type to account for distributional differences between points and polygons. The objective enforces that relational semivariance stays below the type-matched global baseline by margin $\delta$ at every distance bin: 
\begin{equation}
\mathcal{L}_{\text{RSR}} = \frac{1}{|\mathcal{B}|} \sum_{\ell \in \mathcal{B}} \mathbb{E}_{\mathcal{S}_{a,r} \in \mathcal{S}^*(\ell)} \left[ \sum_{b} \omega_b^{(a,r)} \cdot \max\!\left( 0,\; \gamma_{\text{rel}}^{(a,r)}(b) - \gamma_{\text{glob}}^{(t_{a,r})}(b) + \delta \right) \right],
\label{eq:rsr_loss}
\end{equation}
where $\mathcal{S}^*(\ell)$ is the collection of non-empty sibling groups within window $\mathcal{W}_\ell$, $t_{a,r} \in \{\text{point}, \text{polygon}\}$ is the geometry type of entities in $\mathcal{S}_{a,r}$, and $\omega_b^{(a,r)}$ are bin weights normalized within each sibling group to ensure equal contribution across groups regardless of size. Because the empirical semivariances $\gamma_{\text{rel}}$ and $\gamma_{\text{glob}}$ are evaluated per-window and aggregated over the current training batch $\mathcal{B}$, this objective serves as a computationally efficient, stochastic approximation of the global relational semivariogram. Only violations of the margin are penalized.

Overall, \modelname pretraining loss combines all four objectives:
\begin{equation}
\mathcal{L} = \alpha_{\text{\maskedmodeling}}\,\mathcal{L}_{\text{\maskedmodeling}} + \alpha_{\text{GEO}}\,\mathcal{L}_{\text{GEO}} + \alpha_{\text{ACC}}\,\mathcal{L}_{\text{ACC}} + \alpha_{\text{RSR}}\,\mathcal{L}_{\text{RSR}},
\label{eq:joint_loss}
\end{equation}
where $\alpha_{\text{\maskedmodeling}}$, $\alpha_{\text{GEO}}$, $\alpha_{\text{ACC}}$, $\alpha_{\text{RSR}} \geq 0$ are loss weighting coefficients. 

\vspace{-.5em}
\subsection{Downstream Task Adaptation}\label{sec:nara_finetune}\vspace{-.5em}

\modelname is pretrained once using the self-supervised objectives described in this section and remains frozen during downstream adaptation. For a target geoentity $v_i$, spatial context is constructed by defining a fixed-radius neighborhood around $v_i$ and collecting all geoentities whose geometries intersect this region. Semantic attributes may be observed or masked depending on task availability. The resulting contextualized representations encode both the target entity and its surrounding spatial context. For downstream tasks, predictions are made using the pretrained fused embeddings $h_i^{\text{fused}}$, optionally combined with semantic embeddings $h_i^{\text{sem}}$ depending on the task. Only lightweight task-specific components are trained downstream, such as a regression/classification head (e.g., for building function prediction) or an external sequential model using \modelname embeddings as contextual inputs (e.g., for next POI prediction). The pretrained \modelname encoder itself is not retrained or fine-tuned.

\vspace{-1em}
\section{Experiments}\label{sec:exp}\vspace{-1em}
We evaluate \modelname on three standard tasks spanning all geometry types: traffic speed estimation on polylines (\Cref{sec:exp_line}), building function classification on polygons (\Cref{sec:exp_building}), and next POI prediction on points (\Cref{sec:exp_point}). Each task is typically addressed with task-specific feature engineering and architectural choices in prior work; \modelname uses a single architecture pretrained on OSM from New York City (NYC) and Singapore (SG), separately for each task region, without task-specific modifications. Polylines are noded at intersections into segments, each treated as a separate geoentity. Spatial context is built with sliding windows of size $500\,\mathrm{m} \times 500\,\mathrm{m}$ (stride $250\,\text{m}$), and pretraining uses a mask ratio of 40\%. Additional details on anchor construction, implementation, pretraining dataset statistics, task evaluation protocol, and baselines are provided in~\appxref{appdx:anchor}--\appxref{appdx:baseline}.
\vspace{-.5em}
\subsection{Task A: Traffic Speed Estimation}\label{sec:exp_line}\vspace{-.5em}
This task predicts the average observed traffic speed on road segments, where each road segment is represented as a polyline geoentity, using Uber Movement data mapped to OSM road segments following the CityFM protocol~\cite{balsebre2024city}. Traffic speed emerges from its relational spatial context, including nearby schools, commercial areas, and residential neighborhoods that shape congestion, as well as network connectivity and topological relations with surrounding geoentities. The pretrained \modelname encodes each road segment  using all geoentities within its 100-meter buffer and then average-pools segment embeddings to obtain a road-level representation. This strategy allows \modelname to handle arbitrary road lengths and segment granularities without architectural changes. CityFM~\cite{balsebre2024city} fine-tunes pretrained BERT~\cite{devlin-etal-2019-bert} to encode OSM tags, while representing polylines with task-specific engineered features such as segment length, vertex count, length-to-neighbor ratio, and one-hot-encoded OSM road types. We also compare against road-network graph-based methods (Node2Vec~\cite{grover2016node2vec}, GCWC~\cite{hu2019stochastic}, RFN~\cite{jepsen2020relational}), geometry-based encoders (Poly2Vec~\cite{siampou2025polyvec}, Geo2Vec~\cite{chu2026geo2vec}), and spatial representation learning methods (IRN2Vec~\cite{wang2019learning}, GeoVectors~\cite{tempelmeier2021geovectors}). Following CityFM, all methods incorporate the average neighbor speed $\bar{v}_i^{\text{nbr}}$ from the training set as an additional feature.

\textbf{Results.}
\Cref{tab:traffic_speed} reports means over 10 runs with standard deviations, following CityFM~\cite{balsebre2024city}. \modelname achieves the best overall performance (MAE 3.05, MAPE 18.36\%). Performance improves progressively across model categories: topology-only models (MAE 4.83--5.31) capture connectivity but lack semantic and geometric information; geometry-based encodings (MAE 4.20--4.24) benefit from polyline shape; and semantic methods (MAE 3.78--3.92) further improve by incorporating road attributes. CityFM (MAE 3.20, MAPE 19.27\%) achieves strong performance with hand-crafted features and a geometry-type-specific contextualization pipeline for polylines, while \modelname outperforms CityFM across all metrics (+4.69\% MAE, +4.72\% MAPE) with a unified architecture and substantially lower variance (std 0.01 vs 0.01--0.80 across metrics).

We analyze performance across road types (\appxref{appdx:polyline}). \modelname shows the largest improvements over CityFM on high-speed roads: motorway, trunk, and primary (42.3\%, 16.7\%, 9.1\% MAE reduction, respectively), suggesting improved robustness on high-speed corridors where traffic dynamics are influenced by diverse spatial context. \modelname's performance on service roads and construction zones (16 segments total) remains limited due to small numbers of training samples for rare road types (encoded in the semantic attributes), while CityFM explicitly aggregates semantically similar road types into groups, and hence improves availability of training samples for each group.

\begin{table}[h]
\vspace{-1.5em}
\centering
\caption{
Traffic speed prediction: results averaged over 10 independent runs (SD in parentheses). Gains are computed relative to the second-best method. \textbf{Bold} = best, \underline{underline} = second-best.
}
\label{tab:traffic_speed} \scalebox{0.8}{
\begin{tabular}{llcccc}
\toprule
Category & Model & RMSE $\downarrow$ & MAE $\downarrow$ & R$^2$ $\uparrow$ & MAPE $\downarrow$ \\
\midrule
\textit{Topology (Road Network)} 
& Node2Vec
& 6.82 ($\pm$ 0.12) & 5.31 ($\pm$ 0.05) & 0.38 ($\pm$ 0.04) & 32.22\% ($\pm$ 0.6\%) \\
& GCWC
& 6.74 ($\pm$ 0.04) & 5.20 ($\pm$ 0.04) & 0.41 ($\pm$ 0.04) & 32.75\% ($\pm$ 1.2\%) \\
& RFN
& 6.45 ($\pm$ 0.09) & 4.83 ($\pm$ 0.02) & 0.46 ($\pm$ 0.02) & 30.10\% ($\pm$ 0.0\%) \\

\midrule
\textit{Geometry} 
& Poly2Vec
& 5.66 ($\pm$ 0.01) & 4.24 ($\pm$ 0.01) & 0.56 ($\pm$ 0.00) & 25.75\% ($\pm$ 0.1\%) \\
& Geo2Vec
& 5.64 ($\pm$ 0.01) & 4.20 ($\pm$ 0.01) & 0.56 ($\pm$ 0.00) & 25.42\% ($\pm$ 0.2\%) \\

\midrule
\textit{Semantics + Metric} 
& IRN2Vec
& 5.02 ($\pm$ 0.04) & 3.78 ($\pm$ 0.02) & 0.66 ($\pm$ 0.01) & 24.30\% ($\pm$ 0.0\%) \\
& GeoVectors
& 5.21 ($\pm$ 0.07) & 3.92 ($\pm$ 0.06) & 0.64 ($\pm$ 0.06) & 24.03\% ($\pm$ 0.3\%) \\

\midrule
\textit{Semantics + Geometry} 
& CityFM
& \underline{4.08} ($\pm$ 0.01) & \underline{3.20} ($\pm$ 0.01) & \underline{0.77} ($\pm$ 0.02) & \underline{19.27\%} ($\pm$ 0.8\%) \\
\rowcolor{gray!10} 
& \textbf{\modelname} 
& \makecell{\textbf{3.99} ($\pm$ 0.01) \\ {+2.21\%}} 
& \makecell{\textbf{3.05} ($\pm$ 0.01) \\ {+4.69\%}} 
& \makecell{\textbf{0.78} ($\pm$ 0.00) \\ {+1.30\%}} 
& \makecell{\textbf{18.36\%} ($\pm$ 0.1\%) \\ {+4.72\%}} \\
\bottomrule
\end{tabular}}
\vspace{-1.5em}
\end{table}
\vspace{-.5em}
\subsection{Task B: Building Function Classification}
\label{sec:exp_building}\vspace{-.5em}
This task predicts land-use categories for untagged OSM building polygons using the standard 8-class Singapore government land-use taxonomy, following the CityFM protocol~\cite{balsebre2024city}. A building’s function emerges not only from its own footprint, but also from its heterogeneous spatial relationships with surrounding entities, including nearby POIs, road infrastructure, and neighboring buildings, which jointly shape its functional interpretation. The pretrained \modelname encodes the target building using all geoentities within its 100-meter buffer and then passes the concatenated embeddings $[h_i^{\text{fused}};\, h_i^{\text{sem}}]$ to a classification head. 
CityFM~\cite{balsebre2024city} incorporates a building-type-aligned pretraining objective (contrastive learning between rasterized building footprints and OSM building type labels) and relies on engineered features such as precomputed building area. We also compare against non-spatial semantic models (BERT~\cite{devlin-etal-2019-bert}), geometry-based encoders (Poly2Vec~\cite{siampou2025polyvec}, Geo2Vec~\cite{chu2026geo2vec}), and spatial methods incorporating neighborhood context (SpaBERT~\cite{li-etal-2022-spabert}, GeoVectors~\cite{tempelmeier2021geovectors}).

\textbf{Results.}
\Cref{tab:building} shows that \modelname achieves the best Macro-F1 (72.82, +3.88\% over CityFM) and Accuracy (92.22), while Weighted-F1 remains competitive (91.59 vs 92.75). Notably, \modelname exhibits substantially lower variance (e.g., std 0.018 vs 1.7 for Macro-F1), with the worst \modelname run outperforming CityFM's mean for Macro-F1. Performance improves progressively: BERT (Macro-F1 37.56) confirms that semantic attributes alone are insufficient; geometry-based methods (Macro-F1 45.12--58.35) demonstrate that building footprint shape carries strong information; point-based spatial methods (GeoVectors, SpaBERT) underperform geometry-based models because point-level metric proximity fails to capture spatial extent and shape variation of building footprints. CityFM achieves strong performance through building-type-aligned pretraining and a polygon-specific contextualization pipeline. \modelname excels in both Macro-F1 and Accuracy without relying on a task-specific architecture, demonstrating the effectiveness of its contextualized polygon embeddings. CityFM's higher Weighted-F1 is driven by dominant classes (Residential, 67\% of test samples), while \modelname's higher Macro-F1 reflects substantially better generalization to underrepresented classes, particularly Civic \& Community Institutions (+26.6\%) and Transport (+11.9\%) (\appxref{appdx:polygon}).

\begin{table}[h]
\centering
\vspace{-11pt}
\caption{
Building function classification: results averaged over 10 independent runs (SD in parentheses). Gains are computed relative to the second-best method. \textbf{Bold} = best, \underline{underline} = second-best.
}
\label{tab:building} 
\scalebox{0.8}{
\begin{tabular}{llccc}
\toprule
Category & Model & Macro-F1 $\uparrow$ & Weighted-F1 $\uparrow$ & Accuracy $\uparrow$ \\
\midrule
\textit{Non-Spatial Semantics} 
& BERT (fine-tuned) 
& 37.56 ($\pm$ 0.3) & 44.19 ($\pm$ 0.7) & 51.77 ($\pm$ 0.7) \\

\midrule
\textit{Geometry} 
& Poly2Vec 
& 45.12 ($\pm$ 0.031) & 84.25 ($\pm$ 0.010)& 86.65 ($\pm$ 0.008) \\
& Geo2Vec
& 58.35 ($\pm$ 0.007) & 86.90 ($\pm$ 0.003) & 88.05 ($\pm$ 0.003) \\

\midrule
\textit{Semantics + Metric} 
& GeoVectors
& 47.24 ($\pm$ 1.4) & 64.18 ($\pm$ 2.1) & 69.49 ($\pm$ 1.5) \\
& SpaBERT
& 45.06 ($\pm$ 1.1) & 78.47 ($\pm$ 2.4) & 75.95 ($\pm$ 2.7) \\

\midrule
\textit{Semantics + Geometry} 
& CityFM
& \underline{70.10} ($\pm$ 1.7) & \textbf{92.75} ($\pm$ 1.2) & \underline{91.93} ($\pm$ 1.3) \\
\rowcolor{gray!10} 
& \textbf{\modelname} 
& \makecell{\textbf{72.82} ($\pm$ 0.018) \\ {+3.88\%}} 
& \makecell{\underline{91.59} ($\pm$ 0.005) \\ {-1.25\%}} 
& \makecell{\textbf{92.22} ($\pm$ 0.003) \\ {+0.32\%}} \\

\bottomrule
\end{tabular}}
\vspace{-11pt}
\end{table}
\vspace{-.5em}
\subsection{Task C: Next Point-of-Interest Prediction}\label{sec:exp_point}\vspace{-.5em}
This task predicts the next POI a user will visit, given a historical sequence of check-ins, where each POI is drawn from Foursquare check-in data from NYC and SG~\cite{yang2014modeling}, following the POI-Enhancer protocol~\cite{cheng2025poi}. POI semantics emerge from surrounding spatial relations: a cafe within a commercial district exhibits different visit dynamics than one in a residential area, despite sharing the same entity type. POI-Enhancer~\cite{cheng2025poi} generates enriched embeddings by querying Llama-2-7B~\cite{touvron2023llama} with task-specific inputs including statistical analysis of check-ins, POI addresses from OSM (e.g., street name, postal code), and categories of nearby POIs. In contrast, pretrained \modelname encodes the target POI from Foursquare check-in data and all OSM geoentities within its 100-meter spatial buffer jointly through spatial attention, yielding contextual embeddings for the target POI. This cross-dataset integration (i.e., Foursquare POIs contextualized by pretrained OSM entities) demonstrates \modelname's ability to integrate heterogeneous geospatial data sources. Following POI-Enhancer~\cite{cheng2025poi}, these contextual embeddings are integrated into trajectory-based sequential models (TALE~\cite{wan2021tale}, CTLE~\cite{lin2021pre}) via cross-attention fusion, and we evaluate two settings: (i) trajectory models alone as baselines, and (ii) trajectory models augmented with either POI-Enhancer or \modelname embeddings. 

\textbf{Results.}
\Cref{tab:next_poi} shows that incorporating pretrained POI representations consistently improves all trajectory models, highlighting the importance of spatial context beyond sequential transition patterns. CTLE + \modelname achieves the best performance across models with strong gains on commute and work-related venues (neighborhoods +3.31\% Hit@1, Mexican restaurants +2.89\%, deli/bodegas +3.18\%) (\appxref{appdx:point}).

\begin{table}[h]
\vspace{-10pt}
\centering
\caption{
Next POI prediction on Foursquare NY and SG. 
Parentheses indicate relative improvement over the base model; 
$\Delta$ denotes additional gain over POI-Enhancer. 
\textbf{Bold} = best.
}
\label{tab:next_poi}

\renewcommand{\arraystretch}{0.92}
\setlength{\tabcolsep}{4pt}

\scalebox{0.62}{
\begin{tabular}{lllll}
\toprule
\multirow{2}{*}{Model} 
& \multicolumn{2}{c}{NY} 
& \multicolumn{2}{c}{SG} \\
\cmidrule(lr){2-3} \cmidrule(lr){4-5}
& \multicolumn{1}{c}{Hit@1 $\uparrow$} 
& \multicolumn{1}{c}{Hit@5 $\uparrow$} 
& \multicolumn{1}{c}{Hit@1 $\uparrow$} 
& \multicolumn{1}{c}{Hit@5 $\uparrow$} \\
\midrule

TALE
& 5.828 $\pm$ 0.025
& 12.427 $\pm$ 0.177
& 9.777 $\pm$ 0.136
& 22.021 $\pm$ 0.380 \\

+ POI-Enhancer
& 7.876 $\pm$ 0.051 {\footnotesize(+35.14\%)}
& 16.107 $\pm$ 0.144 {\footnotesize(+29.60\%)}
& 10.736 $\pm$ 0.018 {\footnotesize(+9.81\%)}
& 23.159 $\pm$ 0.076 {\footnotesize(+5.17\%)} \\

\rowcolor{gray!10}
+ \textbf{\modelname}
& \textbf{8.165} $\pm$ 0.096 {\footnotesize(+40.13\%, $\Delta$3.67\%)}
& \textbf{16.903} $\pm$ 0.078 {\footnotesize(+36.09\%, $\Delta$4.93\%)}
& \textbf{10.890} $\pm$ 0.051 {\footnotesize(+11.39\%, $\Delta$1.43\%)}
& \textbf{23.889} $\pm$ 0.107 {\footnotesize(+8.48\%, $\Delta$3.15\%)} \\

\midrule

CTLE
& 7.018 $\pm$ 0.148
& 14.194 $\pm$ 0.307
& 10.323 $\pm$ 0.137
& 22.572 $\pm$ 0.121 \\

+ POI-Enhancer
& 7.699 $\pm$ 0.028 {\footnotesize(+9.70\%)}
& 15.872 $\pm$ 0.051 {\footnotesize(+11.82\%)}
& 10.667 $\pm$ 0.003 {\footnotesize(+3.33\%)}
& \textbf{23.199} $\pm$ 0.043 {\footnotesize(+2.78\%)} \\

\rowcolor{gray!10}
+ \textbf{\modelname}
& \textbf{8.005} $\pm$ 0.002 {\footnotesize(+14.04\%, $\Delta$3.98\%)}
& \textbf{16.145} $\pm$ 0.011 {\footnotesize(+13.75\%, $\Delta$1.72\%)}
& \textbf{10.749} $\pm$ 0.180 {\footnotesize(+4.13\%, $\Delta$0.77\%)}
& 23.148 $\pm$ 0.176 {\footnotesize(+2.55\%, $\nabla$0.22\%)} \\

\bottomrule
\end{tabular}}
\vspace{-10pt}
\end{table}
\vspace{-.5em}
\subsection{Ablation Studies}\vspace{-.5em}
\label{sec:ablation}
We ablate each self-supervised objective for traffic speed prediction (NY) and building function classification (SG). Next POI prediction is excluded due to additional confounds introduced by sequential cross-attention modeling. \Cref{tab:ablation_loss} reports controlled relative comparisons using a fixed downstream configuration, whereas~\Cref{tab:traffic_speed,tab:building} report averages over 10 runs following prior work. For traffic speed prediction, the results show limited differences across variants because road segments serve only as anchors, never as anchor members. Consequently, $\mathcal{L}_{\text{ACC}}$ and $\mathcal{L}_{\text{RSR}}$ do not directly regularize polyline embeddings. The stable performance across ablation variants indicates that these objectives do not adversely affect polyline representations. For building function classification, each objective provides consistent gains. $\mathcal{L}_{\text{\maskedmodeling}}$ alone yields relatively low Macro-F1 (69.44). Incorporating $\mathcal{L}_{\text{ACC}}$ leads to the largest improvement (+5.85\% relative Macro-F1), highlighting the importance of anchor-conditioned relational learning for distinguishing underrepresented building types, with $\mathcal{L}_{\text{RSR}}$ providing an additional +1.13\% relative gain in the full model.
\begin{table*}[h]
\centering
\caption{Ablation study of pretraining losses across downstream tasks. \textbf{Bold} = best.}
\label{tab:ablation_loss}
\resizebox{0.85\textwidth}{!}{
\begin{tabular}{l|cccc|ccc}
\toprule
& \multicolumn{4}{c|}{Traffic Speed (NY)}
& \multicolumn{3}{c}{Building Classification (SG)} \\
\cmidrule(lr){2-5} \cmidrule(lr){6-8}
Variant
& RMSE $\downarrow$ & MAE $\downarrow$ & R$^2$ $\uparrow$ & MAPE $\downarrow$
& Macro-F1 $\uparrow$ & Acc. $\uparrow$ & Weighted-F1  $\uparrow$ \\
\midrule
$\mathcal{L}_{\text{\maskedmodeling}}$ only
& \textbf{3.96} & \textbf{3.04} & \textbf{0.78} & 18.33\%
& 69.44 & 91.38 & 90.50 \\
+ $\mathcal{L}_{\text{GEO}}$
& \textbf{3.96} & \textbf{3.04} & \textbf{0.78} & 18.30\%
& 70.62 & 91.71 & 90.96 \\
+ $\mathcal{L}_{\text{ACC}}$
& 3.98 & \textbf{3.04} & \textbf{0.78} & 18.18\%
& 74.68 & 92.52 & 92.05 \\
+ $\mathcal{L}_{\text{RSR}}$ (full)
& 3.99 & \textbf{3.04} & \textbf{0.78} & \textbf{18.16\%}
& \textbf{75.52} & \textbf{92.71} & \textbf{92.30} \\
\bottomrule
\end{tabular}
}
\vspace{-1em}
\end{table*}

In addition, \Cref{tab:ablation_neighbors} isolates the role of spatial context. Replacing each target's true neighborhood with randomly sampled geoentities, while keeping the target entity fixed, substantially degrades performance, indicating that the spatial organization of neighboring entities matters. These results suggest that NARA's gains arise from spatial contextualization rather than only from strong semantic and geometric encodings.
\vspace{-1em}
\begin{table*}[h]
\vspace{-1em}
\centering
\caption{Ablation of neighbor selection strategy.}
\label{tab:ablation_neighbors}
\resizebox{0.93\textwidth}{!}{
\begin{tabular}{l|cccc|ccc}
\toprule
& \multicolumn{4}{c|}{Traffic Speed (NY)}
& \multicolumn{3}{c}{Building Classification (SG)} \\
\cmidrule(lr){2-5} \cmidrule(lr){6-8}
Neighbor Selection
& RMSE $\downarrow$ & MAE $\downarrow$ & R$^2$ $\uparrow$ & MAPE $\downarrow$
& Macro-F1 $\uparrow$ & Acc. $\uparrow$ & Weighted-F1 $\uparrow$ \\
\midrule
Random Neighbors (no spatial)
& 4.18 & 3.23 & 0.76 & 19.58\%
& 26.87 & 76.88 & 73.17 \\
Spatial Neighbors (\modelname)
& \textbf{3.99} & \textbf{3.04} & \textbf{0.78} & \textbf{18.16\%}
& \textbf{75.52} & \textbf{92.71} &\textbf{ 92.30} \\
\bottomrule
\end{tabular}
}
\end{table*}
\vspace{-1em}
\section{Conclusion}\vspace{-1em}
We present \modelname, a self-supervised framework grounded in relation-conditioned spatial autocorrelation, where semantic similarity among geoentities is governed jointly by metric proximity and topological relations. By integrating masked geoentity semantic modeling, spatial-aware attention supervision, and relation-conditioned modeling, \modelname learns representations capturing adaptive, meaningful spatial context without task-specific feature engineering. Across a diverse set of downstream tasks, the same pretrained \modelname achieves robust performance gains compared to task-specialized baselines. Without tailored architectures, \modelname provides a unified pretraining framework for heterogeneous vector geospatial representation learning. Limitations and broader impacts are provided in~\appxref{appdx:limitation}.







\bibliographystyle{unsrt}
\bibliography{references}

@INPROCEEDINGS{lin2020deep,
  author={Lin, Yijun and Chiang, Yao-Yi and Franklin, Meredith and Eckel, Sandrah P. and Ambite, José Luis},
  booktitle={2020 IEEE International Conference on Data Mining (ICDM)}, 
  title={Building Autocorrelation-Aware Representations for Fine-Scale Spatiotemporal Prediction}, 
  year={2020},
  volume={},
  number={},
  pages={352-361},
  doi={10.1109/ICDM50108.2020.00044}}

@inproceedings{
siampou2025polyvec,
title={Poly2Vec: Polymorphic Fourier-Based Encoding of Geospatial Objects for Geo{AI} Applications},
author={Maria Despoina Siampou and Jialiang Li and John Krumm and Cyrus Shahabi and Hua Lu},
booktitle={Forty-second International Conference on Machine Learning},
year={2025},
url={https://openreview.net/forum?id=kWyov6XrXs}
}

@inproceedings{
mai2020multi,
title={Multi-Scale Representation Learning  for Spatial Feature Distributions using Grid Cells},
author={Gengchen Mai and Krzysztof Janowicz and Bo Yan and Rui Zhu and Ling Cai and Ni Lao},
booktitle={International Conference on Learning Representations},
year={2020},
url={https://openreview.net/forum?id=rJljdh4KDH}
}

@inproceedings{balsebre2024city,
author = {Balsebre, Pasquale and Huang, Weiming and Cong, Gao and Li, Yi},
title = {City Foundation Models for Learning General Purpose Representations from OpenStreetMap},
year = {2024},
isbn = {9798400704369},
publisher = {Association for Computing Machinery},
address = {New York, NY, USA},
url = {https://doi.org/10.1145/3627673.3679662},
doi = {10.1145/3627673.3679662},
booktitle = {Proceedings of the 33rd ACM International Conference on Information and Knowledge Management},
pages = {87–97},
numpages = {11},
location = {Boise, ID, USA},
series = {CIKM '24}
}

@inproceedings{yang2025hygmap,
author = {Yang, Yifan and Wang, Jingyuan and Yu, Xie and Tang, Yibang},
title = {HygMap: representing all types of map entities via heterogeneous hypergraph},
year = {2025},
isbn = {978-1-956792-06-5},
url = {https://doi.org/10.24963/ijcai.2025/1049},
doi = {10.24963/ijcai.2025/1049},
booktitle = {Proceedings of the Thirty-Fourth International Joint Conference on Artificial Intelligence},
articleno = {1049},
numpages = {9},
location = {Montreal, Canada},
series = {IJCAI '25}
}

@article{cheng2025poi, title={POI-Enhancer: An LLM-based Semantic Enhancement Framework for POI Representation Learning}, volume={39}, url={https://ojs.aaai.org/index.php/AAAI/article/view/33252}, DOI={10.1609/aaai.v39i11.33252}, abstractNote={POI representation learning plays a crucial role in handling tasks related to user mobility data. Recent studies have shown that enriching POI representations with multimodal information can significantly enhance their task performance. Previously, the textual information incorporated into POI representations typically involved only POI categories or check-in content, leading to relatively weak textual features in existing methods. In contrast, large language models (LLMs) trained on extensive text data have been found to possess rich textual knowledge.
However leveraging such knowledge to enhance POI representation learning presents two key challenges: first, how to extract POI-related knowledge from LLMs effectively, and second, how to integrate the extracted information to enhance POI representations.
To address these challenges, we propose POI-Enhancer, a portable framework that leverages LLMs to improve POI representations produced by classic POI learning models. We first design three specialized prompts to extract semantic information from LLMs efficiently. Then, the Dual Feature Alignment module enhances the quality of the extracted information, while the Semantic Feature Fusion module preserves its integrity. The Cross Attention Fusion module then fully adaptively integrates such high-quality information into POI representations and Multi-View Contrastive Learning further injects human-understandable semantic information into these representations. Extensive experiments on three real-world datasets demonstrate the effectiveness of our framework, showing significant improvements across all baseline representations.}, number={11}, journal={Proceedings of the AAAI Conference on Artificial Intelligence}, author={Cheng, Jiawei and Wang, Jingyuan and Zhang, Yichuan and Ji, Jiahao and Zhu, Yuanshao and Zhang, Zhibo and Zhao, Xiangyu}, year={2025}, month={Apr.}, pages={11509-11517} }

@article{lin2021pre, title={Pre-training Context and Time Aware Location Embeddings from Spatial-Temporal Trajectories for User Next Location Prediction}, volume={35}, url={https://ojs.aaai.org/index.php/AAAI/article/view/16548}, DOI={10.1609/aaai.v35i5.16548}, abstractNote={Pre-training location embeddings from spatial-temporal trajectories is a fundamental procedure and very beneficial for user next location prediction. In the real world, a location usually has variable functionalities under different contextual environments. If the exact functions of a location in the trajectory can be reflected in its embedding, the accuracy of user next location prediction should be improved. Yet, existing location embeddings pre-trained on trajectories are mostly based on distributed word representations, which mix a location’s various functionalities into one latent representation vector. To address this problem, we propose a Context and Time aware Location Embedding (CTLE) model, which calculates a location’s representation vector with consideration of its specific contextual neighbors in trajectories. In this way, the multi-functional properties of locations can be properly tackled. Furthermore, in order to incorporate temporal information in trajectories into location embeddings, we propose a subtle temporal encoding module and a novel pre-training objective, which further improve the quality of location embeddings. We evaluate our proposed model on two real-world mobile user trajectory datasets. The experimental results demonstrate that, compared with the existing embedding methods, our CTLE model can pre-train higher quality location embeddings and significantly improve the performance of downstream user location prediction models.}, number={5}, journal={Proceedings of the AAAI Conference on Artificial Intelligence}, author={Lin, Yan and Wan, Huaiyu and Guo, Shengnan and Lin, Youfang}, year={2021}, month={May}, pages={4241-4248} }

@misc{osm,
  author       = {{OpenStreetMap contributors}},
  title        = {Planet dump from OpenStreetMap},
  year         = {2017},
  howpublished = {\url{https://planet.osm.org}},
  note         = {Accessed: 2026-03-26}
}

@misc{overturemaps,
  author       = {{OpenStreetMap contributors, Overture Maps Foundation}},
  title        = {Overture Maps open data platform},
  year         = {2026},
  howpublished = {\url{https://overturemaps.org}},
  note         = {Accessed: 2026-03-26}
}

@inproceedings{devlin-etal-2019-bert,
    title = "{BERT}: Pre-training of Deep Bidirectional Transformers for Language Understanding",
    author = "Devlin, Jacob  and
      Chang, Ming-Wei  and
      Lee, Kenton  and
      Toutanova, Kristina",
    editor = "Burstein, Jill  and
      Doran, Christy  and
      Solorio, Thamar",
    booktitle = "Proceedings of the 2019 Conference of the North {A}merican Chapter of the Association for Computational Linguistics: Human Language Technologies, Volume 1 (Long and Short Papers)",
    month = jun,
    year = "2019",
    address = "Minneapolis, Minnesota",
    publisher = "Association for Computational Linguistics",
    url = "https://aclanthology.org/N19-1423/",
    doi = "10.18653/v1/N19-1423",
    pages = "4171--4186"
}

@inproceedings{
dosovitskiy2021an,
title={An Image is Worth 16x16 Words: Transformers for Image Recognition at Scale},
author={Alexey Dosovitskiy and Lucas Beyer and Alexander Kolesnikov and Dirk Weissenborn and Xiaohua Zhai and Thomas Unterthiner and Mostafa Dehghani and Matthias Minderer and Georg Heigold and Sylvain Gelly and Jakob Uszkoreit and Neil Houlsby},
booktitle={International Conference on Learning Representations},
year={2021},
url={https://openreview.net/forum?id=YicbFdNTTy}
}

@inproceedings{chu2026geo2vec,
  title={Geo2vec: Shape-and distance-aware neural representation of geospatial entities},
  author={Chu, Chen and Shahabi, Cyrus},
  booktitle={Proceedings of the AAAI Conference on Artificial Intelligence},
  volume={40},
  number={23},
  pages={18985--18993},
  year={2026}
}

@inproceedings{li-etal-2022-spabert,
    title = "{S}pa{BERT}: A Pretrained Language Model from Geographic Data for Geo-Entity Representation",
    author = "Li, Zekun  and
      Kim, Jina  and
      Chiang, Yao-Yi  and
      Chen, Muhao",
    editor = "Goldberg, Yoav  and
      Kozareva, Zornitsa  and
      Zhang, Yue",
    booktitle = "Findings of the Association for Computational Linguistics: EMNLP 2022",
    month = dec,
    year = "2022",
    address = "Abu Dhabi, United Arab Emirates",
    publisher = "Association for Computational Linguistics",
    url = "https://aclanthology.org/2022.findings-emnlp.200/",
    doi = "10.18653/v1/2022.findings-emnlp.200",
    pages = "2757--2769"
}

@article{chen2025self,
title = {Self-supervised representation learning for geospatial objects: A survey},
journal = {Information Fusion},
volume = {123},
pages = {103265},
year = {2025},
issn = {1566-2535},
doi = {https://doi.org/10.1016/j.inffus.2025.103265},
url = {https://www.sciencedirect.com/science/article/pii/S1566253525003380},
author = {Yile Chen and Weiming Huang and Kaiqi Zhao and Yue Jiang and Gao Cong}
}

@inproceedings{li2023urban,
author = {Li, Yi and Huang, Weiming and Cong, Gao and Wang, Hao and Wang, Zheng},
title = {Urban Region Representation Learning with OpenStreetMap Building Footprints},
year = {2023},
isbn = {9798400701030},
publisher = {Association for Computing Machinery},
address = {New York, NY, USA},
url = {https://doi.org/10.1145/3580305.3599538},
doi = {10.1145/3580305.3599538},
booktitle = {Proceedings of the 29th ACM SIGKDD Conference on Knowledge Discovery and Data Mining},
pages = {1363–1373},
numpages = {11},
location = {Long Beach, CA, USA},
series = {KDD '23}
}

@article{mai2023towards,
  title={Towards general-purpose representation learning of polygonal geometries},
  author={Mai, Gengchen and Jiang, Chiyu and Sun, Weiwei and Zhu, Rui and Xuan, Yao and Cai, Ling and Janowicz, Krzysztof and Ermon, Stefano and Lao, Ni},
  journal={GeoInformatica},
  volume={27},
  number={2},
  pages={289--340},
  year={2023},
  publisher={Springer}
}

@inproceedings{tempelmeier2021geovectors,
author = {Tempelmeier, Nicolas and Gottschalk, Simon and Demidova, Elena},
title = {GeoVectors: A Linked Open Corpus of OpenStreetMap Embeddings on World Scale},
year = {2021},
isbn = {9781450384469},
publisher = {Association for Computing Machinery},
address = {New York, NY, USA},
url = {https://doi.org/10.1145/3459637.3482004},
doi = {10.1145/3459637.3482004},
booktitle = {Proceedings of the 30th ACM International Conference on Information \& Knowledge Management},
pages = {4604–4612},
numpages = {9},
location = {Virtual Event, Queensland, Australia},
series = {CIKM '21}
}

@inproceedings{wang2019learning,
author = {Wang, Meng-xiang and Lee, Wang-Chien and Fu, Tao-yang and Yu, Ge},
title = {Learning Embeddings of Intersections on Road Networks},
year = {2019},
isbn = {9781450369091},
publisher = {Association for Computing Machinery},
address = {New York, NY, USA},
url = {https://doi.org/10.1145/3347146.3359075},
doi = {10.1145/3347146.3359075},
booktitle = {Proceedings of the 27th ACM SIGSPATIAL International Conference on Advances in Geographic Information Systems},
pages = {309–318},
numpages = {10},
location = {Chicago, IL, USA},
series = {SIGSPATIAL '19}
}

@ARTICLE{jepsen2020relational,
  author={Jepsen, Tobias Skovgaard and Jensen, Christian S. and Nielsen, Thomas Dyhre},
  journal={IEEE Transactions on Intelligent Transportation Systems}, 
  title={Relational Fusion Networks: Graph Convolutional Networks for Road Networks}, 
  year={2022},
  volume={23},
  number={1},
  pages={418-429},
  doi={10.1109/TITS.2020.3011799}}

@inproceedings{grover2016node2vec,
author = {Grover, Aditya and Leskovec, Jure},
title = {node2vec: Scalable Feature Learning for Networks},
year = {2016},
isbn = {9781450342322},
publisher = {Association for Computing Machinery},
address = {New York, NY, USA},
url = {https://doi.org/10.1145/2939672.2939754},
doi = {10.1145/2939672.2939754},
booktitle = {Proceedings of the 22nd ACM SIGKDD International Conference on Knowledge Discovery and Data Mining},
pages = {855–864},
numpages = {10},
location = {San Francisco, California, USA},
series = {KDD '16}
}

@INPROCEEDINGS{hu2019stochastic,
  author={Hu, Jilin and Guo, Chenjuan and Yang, Bin and Jensen, Christian S.},
  booktitle={2019 IEEE 35th International Conference on Data Engineering (ICDE)}, 
  title={Stochastic Weight Completion for Road Networks Using Graph Convolutional Networks}, 
  year={2019},
  volume={},
  number={},
  pages={1274-1285},
  doi={10.1109/ICDE.2019.00116}}

@inproceedings{mai2023csp,
  title={Csp: Self-supervised contrastive spatial pre-training for geospatial-visual representations},
  author={Mai, Gengchen and Lao, Ni and He, Yutong and Song, Jiaming and Ermon, Stefano},
  booktitle={International Conference on Machine Learning},
  pages={23498--23515},
  year={2023},
  organization={PMLR}
}

@article{liu2026gair,
  title={GAIR: Location-aware self-supervised contrastive pre-training with geo-aligned implicit representations},
  author={Liu, Zeping and Ni, Lao and Wang, Zhangyu and Jiao, Junfeng and Mai, Gengchen},
  journal={ISPRS Journal of Photogrammetry and Remote Sensing},
  page={166-182},
  volumn={237},
  year={2026}
}

@article{szwarcman2025prithvi,
  title={Prithvi-eo-2.0: A versatile multi-temporal foundation model for earth observation applications},
  author={Szwarcman, Daniela and Roy, Sujit and Fraccaro, Paolo and G{\'\i}slason, Orsteinn El{\'\i} and Blumenstiel, Benedikt and Ghosal, Rinki and De Oliveira, Pedro Henrique and de Sousa Almeida, Joao Lucas and Sedona, Rocco and Kang, Yanghui and others},
  journal={IEEE Transactions on Geoscience and Remote Sensing},
  year={2025},
  publisher={IEEE}
}

@inproceedings{noman2024satemae++,
  title={Rethinking transformers pre-training for multi-spectral satellite imagery},
  author={Noman, Mubashir and Naseer, Muzammal and Cholakkal, Hisham and Anwer, Rao Muhammad and Khan, Salman and Khan, Fahad Shahbaz},
  booktitle={Proceedings of the IEEE/CVF Conference on Computer Vision and Pattern Recognition},
  pages={27811--27819},
  year={2024}
}

@inproceedings{klemmer2025satclip,
  title={Satclip: Global, general-purpose location embeddings with satellite imagery},
  author={Klemmer, Konstantin and Rolf, Esther and Robinson, Caleb and Mackey, Lester and Ru{\ss}wurm, Marc},
  booktitle={Proceedings of the AAAI Conference on Artificial Intelligence},
  volume={39},
  number={4},
  pages={4347--4355},
  year={2025}
}

@article{tobler1970computer,
  title={A computer movie simulating urban growth in the Detroit region},
  author={Tobler, Waldo R},
  journal={Economic geography},
  volume={46},
  number={sup1},
  pages={234--240},
  year={1970},
  publisher={Taylor \& Francis}
}

@article{tobler2004first,
  title={On the first law of geography: A reply},
  author={Tobler, Waldo},
  journal={Annals of the association of American geographers},
  volume={94},
  number={2},
  pages={304--310},
  year={2004},
  publisher={Taylor \& Francis}
}

@article{bachmaier2011variogram,
  title={Variogram or semivariogram? Variance or semivariance? Allan variance or introducing a new term?},
  author={Bachmaier, Martin and Backes, Matthias},
  journal={Mathematical Geosciences},
  volume={43},
  number={6},
  pages={735--740},
  year={2011},
  publisher={Springer}
}

@inproceedings{yan2017itdl,
  title={From itdl to place2vec: Reasoning about place type similarity and relatedness by learning embeddings from augmented spatial contexts},
  author={Yan, Bo and Janowicz, Krzysztof and Mai, Gengchen and Gao, Song},
  booktitle={Proceedings of the 25th ACM SIGSPATIAL international conference on advances in geographic information systems},
  pages={1--10},
  year={2017}
}

@inproceedings{jean2019tile2vec,
  title={Tile2vec: Unsupervised representation learning for spatially distributed data},
  author={Jean, Neal and Wang, Sherrie and Samar, Anshul and Azzari, George and Lobell, David and Ermon, Stefano},
  booktitle={Proceedings of the AAAI Conference on Artificial Intelligence},
  volume={33},
  number={01},
  pages={3967--3974},
  year={2019}
}

@inproceedings{wang2024mc,
  title={MC-GTA: Metric-Constrained Model-Based Clustering using Goodness-of-fit Tests with Autocorrelations},
  author={Wang, Zhangyu and Mai, Gengchen and Janowicz, Krzysztof and Lao, Ni},
  booktitle={41st International Conference on Machine Learning, ICML 2024},
  year={2024}
}

@inproceedings{mai2019relaxing,
  title={Relaxing unanswerable geographic questions using a spatially explicit knowledge graph embedding model},
  author={Mai, Gengchen and Yan, Bo and Janowicz, Krzysztof and Zhu, Rui},
  booktitle={International conference on geographic information science},
  pages={21--39},
  year={2019},
  organization={Springer}
}

@article{huang2022estimating,
  title={Estimating urban functional distributions with semantics preserved POI embedding},
  author={Huang, Weiming and Cui, Lizhen and Chen, Meng and Zhang, Daokun and Yao, Yao},
  journal={International Journal of Geographical Information Science},
  volume={36},
  number={10},
  pages={1905--1930},
  year={2022},
  publisher={Taylor \& Francis}
}

@inproceedings{li2023rethink,
  title={Rethink geographical generalizability with unsupervised self-attention model ensemble: A case study of openstreetmap missing building detection in africa},
  author={Li, Hao and Wang, Jiapan and Zollner, Johann Maximilian and Mai, Gengchen and Lao, Ni and Werner, Martin},
  booktitle={Proceedings of the 31st ACM International Conference on Advances in Geographic Information Systems},
  pages={1--9},
  year={2023}
}

@article{randell1992spatial,
  title={A spatial logic based on regions and connection.},
  author={Randell, David A and Cui, Zhan and Cohn, Anthony G and others},
  journal={KR},
  volume={92},
  number={165-176},
  pages={40--40},
  year={1992}
}

@inproceedings{egenhofer1990mathematical,
  title={A mathematical framework for the definition of topological relations},
  author={Egenhofer, Max},
  booktitle={Proc. the fourth international symposium on spatial data handing},
  pages={803--813},
  year={1990}
}

@article{regalia2019computing,
  title={Computing and querying strict, approximate, and metrically refined topological relations in linked geographic data},
  author={Regalia, Blake and Janowicz, Krzysztof and McKenzie, Grant},
  journal={Transactions in GIS},
  volume={23},
  number={3},
  pages={601--619},
  year={2019},
  publisher={Wiley Online Library}
}

@article{touvron2023llama,
  title={Llama 2: Open foundation and fine-tuned chat models},
  author={Touvron, Hugo and Martin, Louis and Stone, Kevin and Albert, Peter and Almahairi, Amjad and Babaei, Yasmine and Bashlykov, Nikolay and Batra, Soumya and Bhargava, Prajjwal and Bhosale, Shruti and others},
  journal={arXiv preprint arXiv:2307.09288},
  year={2023}
}

@ARTICLE{wan2021tale,
  author={Wan, Huaiyu and Lin, Yan and Guo, Shengnan and Lin, Youfang},
  journal={IEEE Transactions on Knowledge and Data Engineering}, 
  title={Pre-Training Time-Aware Location Embeddings from Spatial-Temporal Trajectories}, 
  year={2022},
  volume={34},
  number={11},
  pages={5510-5523},
  doi={10.1109/TKDE.2021.3057875}}

@inproceedings{zhou2024road,
 author = {Zhou, Haicang and Huang, Weiming and Chen, Yile and He, Tiantian and Cong, Gao and Ong, Yew-Soon},
 booktitle = {Advances in Neural Information Processing Systems},
 doi = {10.52202/079017-0376},
 editor = {A. Globerson and L. Mackey and D. Belgrave and A. Fan and U. Paquet and J. Tomczak and C. Zhang},
 pages = {11789--11813},
 publisher = {Curran Associates, Inc.},
 title = {Road Network Representation Learning with the Third Law of  Geography},
 url = {https://proceedings.neurips.cc/paper_files/paper/2024/file/15cc8e4a46565dab0c1a1220884bd503-Paper-Conference.pdf},
 volume = {37},
 year = {2024}
}

@book{longley2015geographic,
  title={Geographic information science and systems},
  author={Longley, Paul A and Goodchild, Michael F and Maguire, David J and Rhind, David W},
  year={2015},
  publisher={John Wiley \& Sons}
}

@article{zhang2023road,
  title={Road network representation learning: A dual graph-based approach},
  author={Zhang, Liang and Long, Cheng},
  journal={ACM Transactions on Knowledge Discovery from Data},
  volume={17},
  number={9},
  pages={1--25},
  year={2023},
  publisher={ACM New York, NY}
}

@inproceedings{
loshchilov2018decoupled,
title={Decoupled Weight Decay Regularization},
author={Ilya Loshchilov and Frank Hutter},
booktitle={International Conference on Learning Representations},
year={2019},
url={https://openreview.net/forum?id=Bkg6RiCqY7},
}

@article{yang2014modeling,
  title={Modeling user activity preference by leveraging user spatial temporal characteristics in LBSNs},
  author={Yang, Dingqi and Zhang, Daqing and Zheng, Vincent W and Yu, Zhiyong},
  journal={IEEE Transactions on Systems, Man, and Cybernetics: Systems},
  volume={45},
  number={1},
  pages={129--142},
  year={2014},
  publisher={IEEE}
}

@article{haklay2010good,
  title={How good is volunteered geographical information? A comparative study of OpenStreetMap and Ordnance Survey datasets},
  author={Haklay, Mordechai},
  journal={Environment and planning B: Planning and design},
  volume={37},
  number={4},
  pages={682--703},
  year={2010},
  publisher={SAGE Publications Sage UK: London, England}
}

@inproceedings{li-etal-2023-geolm,
    title = "{G}eo{LM}: Empowering Language Models for Geospatially Grounded Language Understanding",
    author = "Li, Zekun  and
      Zhou, Wenxuan  and
      Chiang, Yao-Yi  and
      Chen, Muhao",
    editor = "Bouamor, Houda  and
      Pino, Juan  and
      Bali, Kalika",
    booktitle = "Proceedings of the 2023 Conference on Empirical Methods in Natural Language Processing",
    month = dec,
    year = "2023",
    address = "Singapore",
    publisher = "Association for Computational Linguistics",
    url = "https://aclanthology.org/2023.emnlp-main.317/",
    doi = "10.18653/v1/2023.emnlp-main.317",
    pages = "5227--5240"
}

\newpage
\appendix
\section{Limitations \& Broader Impact}\label{appdx:limitation}
\vspace{-.5em}\paragraph{Limitations} \modelname is pretrained on a specific geographic region; as with most geospatial models~\cite{chen2025self}, spatial variability across cities with different urban structures may limit transferability. We treat all polygons and polylines as potential anchors with equal importance, though anchor salience varies in practice. Learning anchor importance automatically, for example, by weighting anchors by spatial extent or entity density, is a promising direction for future work. Finally, geoentity attributes reflect human annotation biases inherent in crowdsourced OSM data~\cite{haklay2010good}, where tagging patterns reflect perceived importance rather than comprehensive coverage, which may introduce systematic biases into learned representations.

\textbf{Broader Impact.} \modelname demonstrates that a single pretraining framework can serve diverse geospatial tasks across geometry types without requiring task-specific architectural design or domain expertise for feature engineering. Our experiments show transferability across data sources within the same regions: representations pretrained on OSM entities successfully contextualize Foursquare POIs for next-visit prediction. The framework enables diverse applications, including traffic speed attribution for polylines, functional typing for untagged building polygons, and integration with mobility applications for point-based POI prediction, paving the way toward unified pretraining frameworks for vector geospatial data.

\section{Method Details}\label{appdx:method}

\subsection{Topological Relation Definitions}\label{appdx:topo}
We adopt a symmetric topological relation schema based on DE-9IM predicates~\cite{egenhofer1990mathematical}. We define four relations $r \in {0,1,2,3}$ and enforce mutual exclusivity via the following precedence: Contains/Within > Adjacent > Intersects > Disjoint. Relations are treated symmetrically: if $v_i$ is in relation $r$ with $v_j$, then $v_j$ is in relation $r$ with $v_i$.
\begin{itemize}[leftmargin=*,topsep=0pt,itemsep=1pt,partopsep=0pt,parsep=0pt]
\item \textbf{Disjoint ($r=0$):} $g_i \cap g_j = \emptyset$.
\item \textbf{Intersects ($r=1$):} Geometries share interior points but do not satisfy higher-precedence relations.
\item \textbf{Adjacent ($r=2$):} Geometries share boundary points while interiors do not intersect.
\item \textbf{Contains/Within ($r=3$):} One geometry is fully contained within the other. For representation learning, we treat containment and within as a shared symmetric contextual relation, focusing on co-participation in the same spatial context rather than directional topology.
\end{itemize}
We adopt symmetric predicates because our objective is to capture shared relational context rather than directional roles. For example, a shop within a mall and the mall enclosing the shop describe the same spatial configuration from complementary perspectives. This design aligns with our anchor-conditioned formulation (\Cref{sec:relational}), where entities sharing a relation to a common anchor are treated as siblings regardless of directional roles. Collapsing directional predicates also reduces the relation space, simplifying the classification task.

\subsection{Semivariogram of Spatial Relations}\label{appdx:semivariogram}
The relational semivariogram regularization $\mathcal{L}_{\text{RSR}}$ enforces that sibling pairs exhibit lower dispersion than non-sibling pairs at the same distance. We formalize the connection between cosine dissimilarity and squared Euclidean distance under $\ell_2$ normalization, and show that enforcing lower semivariance translates to a margin-based similarity gap.

\begin{proposition}[Equivalence under normalization]
\label{prop:cosine_l2}
If embeddings are $\ell_2$-normalized ($\|h_i^{\text{sem}}\| = 1$ for all $i$), then for any pair $(v_i, v_j)$:
\begin{equation}
\|h_i^{\text{sem}} - h_j^{\text{sem}}\|^2 = 2\left(1 - \operatorname{sim}(h_i^{\text{sem}}, h_j^{\text{sem}})\right),
\end{equation}
where $\operatorname{sim}(u,v) = u^\top v / (\|u\|\|v\|)$ is cosine similarity.
\end{proposition}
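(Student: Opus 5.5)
The plan is to expand the squared norm with the bilinearity of the Euclidean inner product and then substitute the normalization hypothesis; no earlier result of the paper is needed. Write $u = h_i^{\text{sem}}$ and $v = h_j^{\text{sem}}$ for brevity. First, the squared norm of the difference expands as
\begin{equation}
\|u - v\|^2 = (u-v)^\top(u-v) = \|u\|^2 - 2\,u^\top v + \|v\|^2,
\end{equation}
where the symmetry $u^\top v = v^\top u$ merges the two cross terms.

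Second, we invoke the hypothesis $\|u\| = \|v\| = 1$. It reduces the first and last terms to $1$ each, giving $\|u-v\|^2 = 2 - 2\,u^\top v$. The same hypothesis makes the denominator in the definition $\operatorname{sim}(u,v) = u^\top v/(\|u\|\|v\|)$ equal to $1$, so $\operatorname{sim}(u,v) = u^\top v$. Substituting this into the previous line yields $\|u-v\|^2 = 2\bigl(1 - \operatorname{sim}(u,v)\bigr)$, which is the claim for every pair $(v_i,v_j)$.

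There is no genuine obstacle here. The only point needing care is that cosine similarity must be well defined, which requires nonzero vectors; unit norm guarantees this. As a follow-up remark suited to the surrounding appendix, averaging this identity over any pair set $\Pi$ shows $\hat{\gamma}(\Pi) = \tfrac{1}{2}\cdot\frac{1}{|\Pi|}\sum_{(v_i,v_j)\in\Pi}\tfrac{1}{2}\|h_i^{\text{sem}}-h_j^{\text{sem}}\|^2$. In words, $\hat{\gamma}(\Pi)$ is the classical empirical semivariance of \Cref{eq:variogram_def} up to the constant factor $\tfrac12$, which justifies using cosine distance as its proxy in $\mathcal{L}_{\text{RSR}}$.
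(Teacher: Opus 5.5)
Your proof is correct and is exactly the paper's argument: expand $\|u-v\|^2 = \|u\|^2 - 2u^\top v + \|v\|^2$, use $\|u\| = \|v\| = 1$ to get $2 - 2u^\top v$, and note that unit norms reduce $\operatorname{sim}(u,v)$ to $u^\top v$. Your closing aside, which is not needed for the proposition, has an arithmetic slip: averaging the identity gives $\hat{\gamma}(\Pi) = \frac{1}{|\Pi|}\sum_{(v_i,v_j)\in\Pi} \tfrac{1}{2}\|h_i^{\text{sem}} - h_j^{\text{sem}}\|^2$ with no extra leading $\tfrac{1}{2}$, so $\hat{\gamma}$ equals the classical empirical semivariance exactly (the paper's ``factor of 2'' compares $1-\operatorname{sim}$ with the squared distance itself, not with the semivariance).
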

\begin{proof}
Expand the squared $\ell_2$ distance:
\[
\|h_i^{\text{sem}} - h_j^{\text{sem}}\|^2 
= \|h_i^{\text{sem}}\|^2 + \|h_j^{\text{sem}}\|^2 - 2(h_i^{\text{sem}})^\top h_j^{\text{sem}}.
\]
Under $\ell_2$ normalization, $\|h_i^{\text{sem}}\| = \|h_j^{\text{sem}}\| = 1$, so:
\[
\|h_i^{\text{sem}} - h_j^{\text{sem}}\|^2 
= 2 - 2(h_i^{\text{sem}})^\top h_j^{\text{sem}}
= 2\left(1 - \operatorname{sim}(h_i^{\text{sem}}, h_j^{\text{sem}})\right).
\]
\renewcommand{\qedsymbol}{}
\end{proof}

This justifies using $1 - \operatorname{sim}(h_i^{\text{sem}}, h_j^{\text{sem}})$ as a proxy for squared distance in the semivariogram, up to a constant factor of 2.

\begin{proposition}[Dispersion ordering implies similarity gap]
\label{prop:relvar_gap}
Fix a distance bin $b$ and an anchor-relation group $g = (a,r)$ with geometry type $t_g$. Assume both $P_b$ (non-sibling pairs of type $t_g$ in bin $b$) and $P_b^{(g)}$ (sibling pairs in group $g$ in bin $b$) are non-empty. If the hinge term in $\mathcal{L}_{\text{RSR}}$ corresponding to $(b,g)$ equals zero, then:
\begin{equation}
\gamma_{\text{rel}}^{(g)}(b) \le \gamma_{\text{glob}}^{(t_g)}(b) - \delta.
\end{equation}
Under $\ell_2$ normalization, defining the average similarity over a pair set $P$ as:
\begin{equation}
S(P) = \frac{1}{|P|} \sum_{(v_i, v_j) \in P} \operatorname{sim}(h_i^{\text{sem}}, h_j^{\text{sem}}),
\end{equation}
it follows that:
\begin{equation}
S(P_b^{(g)}) \ge S(P_b) + \delta.
\end{equation}
\end{proposition}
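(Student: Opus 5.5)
The plan is to argue in two short steps: first read off the dispersion ordering from the vanishing hinge term, and then convert it into a similarity gap by linearity of the averaging operator.

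\textbf{Step 1 (hinge to dispersion ordering).} The term of $\mathcal{L}_{\text{RSR}}$ indexed by $(b,g)$ is $\omega_b^{(g)}\max\bigl(0,\gamma_{\text{rel}}^{(g)}(b)-\gamma_{\text{glob}}^{(t_g)}(b)+\delta\bigr)$. I interpret ``the hinge term equals zero'' as the $\max(\cdot)$ itself vanishing. This matters because a zero bin weight would otherwise make the hypothesis vacuous, so I will state this reading explicitly. Since $\max(0,x)=0$ if and only if $x\le 0$, we obtain $\gamma_{\text{rel}}^{(g)}(b)\le\gamma_{\text{glob}}^{(t_g)}(b)-\delta$, which is the first claim. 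The non-emptiness of $P_b$ and $P_b^{(g)}$ is used only to guarantee that both semivariances are well-defined averages, that is, that no division by $|\Pi|=0$ occurs.

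\textbf{Step 2 (dispersion to similarity).} By the definition of $\hat\gamma$, for any non-empty pair set $P$ we have
\[
\hat\gamma(P)=\frac{1}{|P|}\sum_{(v_i,v_j)\in P}\bigl(1-\operatorname{sim}(h_i^{\text{sem}},h_j^{\text{sem}})\bigr)=1-S(P).
\]
Identifying $\gamma_{\text{rel}}^{(g)}(b)=\hat\gamma(P_b^{(g)})$ and $\gamma_{\text{glob}}^{(t_g)}(b)=\hat\gamma(P_b)$, the inequality from Step 1 becomes $1-S(P_b^{(g)})\le 1-S(P_b)-\delta$. Rearranging gives $S(P_b^{(g)})\ge S(P_b)+\delta$. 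I will also note, citing Proposition~\ref{prop:cosine_l2}, that under $\ell_2$ normalization the same statement reads $\tfrac12$ of the mean squared Euclidean distance. This shows the gap is a genuine dispersion gap, and it is where the normalization hypothesis enters; the algebraic step itself does not require it.

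\textbf{Main obstacle.} Nothing here is technically hard. The only delicate point is bookkeeping: making precise which pair sets the semivariances are computed over (per window versus batch-aggregated). I will fix both as the same finite sets $P_b$ and $P_b^{(g)}$ named in the statement, so that the identity $\hat\gamma=1-S$ applies verbatim.
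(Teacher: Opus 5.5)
Your proposal is correct and follows the paper's proof essentially step for step: you read off $\gamma_{\text{rel}}^{(g)}(b)\le\gamma_{\text{glob}}^{(t_g)}(b)-\delta$ from the vanishing hinge, then use $\hat\gamma(P)=1-S(P)$ and rearrange. Your side remarks are accurate refinements rather than a different route: the $\max(\cdot)$ itself, not the $\omega_b$-weighted term, must vanish, and $\hat\gamma=1-S$ holds directly by definition without the $\ell_2$ normalization that the paper invokes via Proposition~\ref{prop:cosine_l2}.
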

\begin{proof}
If the hinge term equals zero, then by definition:
\[
\gamma_{\text{rel}}^{(g)}(b) - \gamma_{\text{glob}}^{(t_g)}(b) + \delta \le 0,
\]
which directly implies $\gamma_{\text{rel}}^{(g)}(b) \le \gamma_{\text{glob}}^{(t_g)}(b) - \delta$.

Under $\ell_2$ normalization, averaging Proposition~\ref{prop:cosine_l2} over any pair set $P$ gives:
\[
\hat{\gamma}(P) = \frac{1}{|P|} \sum_{(v_i,v_j) \in P} \left(1 - \operatorname{sim}(h_i^{\text{sem}}, h_j^{\text{sem}})\right) = 1 - S(P).
\]
Substituting $\gamma_{\text{rel}}^{(g)}(b) = 1 - S(P_b^{(g)})$ and $\gamma_{\text{glob}}^{(t_g)}(b) = 1 - S(P_b)$ into the dispersion inequality:
\[
1 - S(P_b^{(g)}) \le 1 - S(P_b) - \delta,
\]
which rearranges to $S(P_b^{(g)}) \ge S(P_b) + \delta$.
\renewcommand{\qedsymbol}{}
\end{proof}
\textbf{Interpretation:} Proposition~\ref{prop:relvar_gap} shows that minimizing $\mathcal{L}_{\text{RSR}}$ enforces a margin-based similarity gap between sibling and non-sibling pairs at each distance bin. Sibling pairs must be at least $\delta$ more similar (in cosine similarity) than non-sibling pairs at the same distance, ensuring that relational structure enhances semantic coherence beyond metric proximity alone.

\subsection{Anchor Construction Details}\label{appdx:anchor}
We define three types of anchor-member relationships: (1) polyline anchors and polygon members within a 30-meter buffer (e.g., buildings near roads), (2) polyline anchors and point members within a 30-meter buffer (e.g., POIs near roads), and (3) polygon anchors and point members within a 5-meter buffer to capture containment (e.g., POIs inside buildings). All polygons and polylines serve as anchors without filtering by size; the spatial windowing in \maskedmodeling naturally constrains the anchor space. For polylines spanning multiple windows due to noding at intersections, we use the parent way ID for semantic grouping, treating noded segments of the same way as a single anchor. For the global (non-sibling) pairs used as the type-matched baseline in $\mathcal{L}_{\text{RSR}}$, we sample same-type pairs within a 100-meter buffer in each window and remove any pair that already appears as an anchor sibling, ensuring the baseline is purified of relational structure.
\begin{table}[h]
\centering
\caption{Counts of geoentity relations used for spatial context construction.}
\label{tab:spatial_relations}
\small
\begin{tabular}{lcc}
\toprule
\textbf{Neighborhood \& Relation} & \textbf{NYC} & \textbf{SG} \\
\midrule
Spatial Windows                     & 61,008    & 44,982    \\
\midrule
\multicolumn{3}{l}{\textit{Sibling Relations}} \\
Line $\rightarrow$ Point            & 348,055   & 277,308   \\
Line $\rightarrow$ Polygon          & 729,654   & 1,048,146 \\
Polygon $\rightarrow$ Point         & 41,958    & 66,002    \\
\midrule
\multicolumn{3}{l}{\textit{Global (Non-Sibling)}} \\
All Pairs                           & 1,299,060 & 1,622,367 \\
\bottomrule
\end{tabular}
\end{table}

\section{Experimental Setup}\label{appdx:setup}
\subsection{\modelname Implementation Details}\label{appdx:implementation}
We employ BERT-base~\cite{devlin-etal-2019-bert} (frozen, 768-dim output) for semantic encoding and Poly2Vec~\cite{siampou2025polyvec} (128-dim output) for geometry encoding. Both are projected to a shared 128-dimensional space and fused via adaptive gating (MLP with gate bias initialized to $-2.0$). The Transformer has 3 layers, 4 heads, feedforward dimension 256, and dropout 0.1. Each layer uses dual-stream attention: shared Q/K from fused embeddings $E^{\text{fused}}$ produce attention map $A$, which is applied to separate value branches for $H^{\text{sem}}$ from masked semantic embeddings $\breve E^{\text{sem}}$ and $H^{\text{fused}}$ from $E^{\text{fused}}$.

We pretrain \modelname with AdamW~\cite{loshchilov2018decoupled} with learning rate $2 \times 10^{-4}$, weight decay 0.01, and cosine annealing over 100 epochs. Batch size is 256, gradients are clipped at norm 1.0. Loss weights: $\alpha_{\text{\maskedmodeling}} = 1.0$, $\alpha_{\text{ACC}} = 1.0$, $\alpha_{\text{RSR}} = 50.0$, $\alpha_{\text{topo}} = 0.5$, $\alpha_{\text{dist}} = 100.0$. Temperatures: $\tau_{\text{\maskedmodeling}} = 0.15$, $\tau_{\text{ACC}} = 0.3$ (NY) / $0.4$ (SG). Semivariogram margin $\delta = 0.4$, distance decay $\lambda = 20.0$. Each pretraining was conducted on a single NVIDIA A100-SXM4-40GB GPU, with a total wall-clock runtime of approximately 7 hours (SG) and 10 hours (NY).

\subsection{Pretraining Dataset Statistics}\label{appdx:pretraining}
Table~\ref{tab:geom_stats_appdx} summarizes the number of geoentities by geometry type, and Table~\ref{tab:spatial_relations} summarizes the spatial relations used for context construction. Polylines are noded at intersections prior to pretraining, resulting in the segment counts reported. Spatial windows of size $500\,\text{m}$ with stride $250\,\text{m}$ yield 61,008 and 44,982 windows for NYC and SG respectively. Sibling relations are defined over anchor-entity pairs sharing the same topological relation to a common anchor (polygon or polyline), covering line-to-point, line-to-polygon, and polygon-to-point configurations. Global (non-sibling) pairs are sampled across all geometry types within each spatial window to compute the type-matched global semivariance $\gamma_{\text{glob}}^{(t)}(b)$ in $\mathcal{L}_{\text{RSR}}$.

\begin{table}[h]
\centering
\caption{Geoentity statistics for NYC and Singapore after data preprocessing.}
\label{tab:geom_stats_appdx}
\small
\begin{tabular}{lcc}
\toprule
Geoentity Type & NYC & SG \\
\midrule
Point              & 43,549  & 23,621 \\
Polygon            & 49,486  & 40,594 \\
Polyline           & 186,667 & 181,592 \\
\bottomrule
\end{tabular}
\end{table}

\subsection{Downstream Task Protocols}\label{appdx:protocols}
\vspace{-.5em}\paragraph{Task A: Traffic Speed Estimation.}
We follow the CityFM protocol~\cite{balsebre2024city}. The data source is Uber Movement, which provides taxi speed measurements mapped to OSM road segments in New York City. Road segments with fewer than 10 observations are removed, resulting in 29,755 segments. Speeds are measured in miles per hour (mph) (mean 19.50, standard deviation 8.69, range 1.46--57.37). The dataset is split into 60\% training, 20\% validation, and 20\% test sets. Evaluation metrics include root mean square error (RMSE), mean absolute error (MAE), coefficient of determination (R$^2$), and mean absolute percentage error (MAPE), reported as mean and standard deviation over 10 runs.

\vspace{-.5em}\paragraph{Task B: Building Function Classification.}
We follow the CityFM protocol~\cite{balsebre2024city}. The dataset consists of 64,384 OSM building polygons in Singapore labeled with an 8-class land-use taxonomy. The dataset is split into 50\% training, 25\% validation, and 25\% test sets. Evaluation metrics include Macro-F1, Weighted-F1, and Accuracy, reported as mean and standard deviation over 10 runs.

\vspace{-.5em}\paragraph{Task C: Next POI Prediction.}
We follow the POI-Enhancer protocol~\cite{cheng2025poi}. The datasets are Foursquare check-in data from~\cite{yang2014modeling}, including Foursquare-NY (15,171 users, 24,118 POIs, 641,005 check-ins) and Foursquare-SG (10,909 users, 20,154 POIs, 696,306 check-ins). Each dataset is randomly split into 60\% training and 40\% test sets. Performance is evaluated using Hit@$k$ ($k=1,5$), which equals 1 if the ground-truth POI appears in the top-$k$ predictions and 0 otherwise. We report the mean and standard deviation over three independent runs.

\subsection{Baseline Details}\label{appdx:baseline}

\vspace{-.5em}
\paragraph{Task A: Traffic Speed Estimation Baselines.}
\begin{itemize}[leftmargin=*,topsep=0pt,itemsep=1pt,partopsep=0pt,parsep=0pt]

\item \textit{Topology (Road Network):}
\begin{itemize}
    \item \textbf{Node2Vec}~\cite{grover2016node2vec}: learns node representations from biased random walks over the road-network graph by encouraging neighboring nodes in the walk context to have similar embeddings.
    
    \item \textbf{GCWC}~\cite{hu2019stochastic}: a graph convolutional method that captures edge correlations and local spatial dependencies in road networks.
    
    \item \textbf{RFN}~\cite{jepsen2020relational}: a graph neural network that models relational dependencies between connected road segments.
\end{itemize}

\item \textit{Geometry:}
\begin{itemize}
    \item \textbf{Poly2Vec}~\cite{siampou2025polyvec}: a geometry encoder that represents coordinate sequences in the spectral domain using Fourier transforms.
    
    \item \textbf{Geo2Vec}~\cite{chu2026geo2vec}: a geometry encoder based on Signed Distance Fields (SDFs) that captures fine-grained spatial boundary information.
\end{itemize}

\item \textit{Semantics + Metric:}
\begin{itemize}
    \item \textbf{IRN2Vec}~\cite{wang2019learning}: learns road-intersection embeddings from random-walk sequences generated over adjacent intersections.
    
    \item \textbf{GeoVectors}~\cite{tempelmeier2021geovectors}: learns embeddings of OSM entities by jointly modeling geographic coordinates and semantic tags.
\end{itemize}

\item \textit{Semantics + Geometry:}
\begin{itemize}
    \item \textbf{CityFM}~\cite{balsebre2024city}: combines hand-crafted polyline features, positional encodings, and learned semantic representations of surrounding point and polygon geoentities.
\end{itemize}

\end{itemize}

\vspace{-.5em}
\paragraph{Task B: Building Function Classification Baselines.}
\begin{itemize}[leftmargin=*,topsep=0pt,itemsep=1pt,partopsep=0pt,parsep=0pt]

\item \textit{Non-Spatial Semantics:}
\begin{itemize}
    \item \textbf{BERT (fine-tuned)}~\cite{devlin-etal-2019-bert}: a pretrained language model used as the textual encoder backbone.
\end{itemize}

\item \textit{Geometry:}
\begin{itemize}
    \item \textbf{Poly2Vec}~\cite{siampou2025polyvec}, \textbf{Geo2Vec}~\cite{chu2026geo2vec}: geometry encoders that map spatial coordinates into continuous latent representations.
\end{itemize}

\item \textit{Semantics + Metric:}
\begin{itemize}
    \item \textbf{GeoVectors}~\cite{tempelmeier2021geovectors}: encodes buildings using geographic coordinates and semantic information from neighboring entities.
    
    \item \textbf{SpaBERT}~\cite{li-etal-2022-spabert}: encodes surrounding entity names together with relative distance information from the target building.
\end{itemize}

\item \textit{Semantics + Geometry:}
\begin{itemize}
    \item \textbf{CityFM}~\cite{balsebre2024city}: combines polygon geometry features, positional encodings, polygon area, and learned semantic representations of nearby geographic entities.
\end{itemize}
\end{itemize}

\vspace{-.5em}
\paragraph{Task C: Next POI Prediction Baselines.}
\begin{itemize}[leftmargin=*,topsep=0pt,itemsep=1pt,partopsep=0pt,parsep=0pt]

\item \textbf{TALE}~\cite{wan2021tale}: learns POI embeddings from visit co-occurrence patterns while modeling temporal periodicity through hierarchical time discretization.

\item \textbf{CTLE}~\cite{lin2021pre}: a self-supervised Transformer that learns contextual POI representations by reconstructing masked location and temporal attributes from visit sequences.

\item \textbf{POI-Enhancer}~\cite{cheng2025poi}: enhances POI representations using auxiliary textual knowledge generated by large language models (e.g., Llama-2-7B~\cite{touvron2023llama}). It incorporates information such as addresses, nearby POIs, and check-in statistics, and aligns text-enhanced representations with pretrained POI embeddings (e.g., TALE or CTLE).
\end{itemize}

\section{Additional Analyses}\label{appdx:analyses}
\subsection{Polyline Analysis: Performance by Road Type}\label{appdx:polyline}
\Cref{tab:road_type_analysis} evaluates traffic speed estimation across OpenStreetMap road categories. \modelname achieves substantial error reductions on high-capacity corridors, including motorways (+42.3\%) and trunk roads (+16.7\%). Performance remains unstable for highly underrepresented categories such as ``Construction'' and ``Service'', which together contain only $16$ test segments. While CityFM incorporates manually designed semantic features that may help in rare categories (e.g., grouping with certain OSM road types), \modelname consistently outperforms across primary road types.

\begin{table}[h]
\centering
\caption{Per-type performance for traffic speed prediction. Counts denote the number of test instances per class.}
\label{tab:road_type_analysis}
\scalebox{0.80}{
\begin{tabular}{lc|ccc}
\toprule
Road Type & Count & MAE (\modelname) & MAE (CityFM~\cite{balsebre2024city}) & Improvement \\
\midrule
Pedestrian         & 1     & 1.17 & 3.91 & +70.0\% \\
Motorway           & 279   & 3.34 & 5.78 & +42.3\% \\
Tertiary link      & 12    & 2.86 & 4.20 & +31.9\% \\
Trunk link         & 30    & 3.44 & 4.46 & +23.0\% \\
Trunk              & 71    & 3.63 & 4.36 & +16.7\% \\
Living street      & 4     & 2.88 & 3.37 & +14.6\% \\
Primary link       & 59    & 3.68 & 4.20 & +12.2\% \\
Unclassified       & 89    & 3.30 & 3.71 & +10.9\% \\
Primary            & 1,114 & 2.90 & 3.19 & +9.1\% \\
Secondary          & 1,100 & 2.98 & 3.23 & +7.6\% \\
Motorway link      & 377   & 5.15 & 5.49 & +6.2\% \\
Tertiary           & 602   & 2.99 & 3.17 & +5.4\% \\
Residential        & 2,160 & 2.68 & 2.79 & +3.9\% \\
Secondary link     & 37    & 4.00 & 4.05 & +1.3\% \\
Construction       & 4     & 4.40 & 2.60 & --68.9\% \\
Service            & 12    & 9.15 & 4.07 & --124.5\% \\
\bottomrule
\end{tabular}}
\end{table}

\subsection{Polygon Analysis: Performance by Land Use Label}\label{appdx:polygon}
\Cref{tab:building_per_class} evaluates building classification performance across land-use categories. \modelname outperforms CityFM in $7$ out of $8$ classes, demonstrating robust performance on dominant classes such as ``Residential'' while substantially improving accuracy on less frequent functional categories. In particular, we observe large gains in Civic \& Community Institutions (+26.55 F1) and Transport (+11.94 F1).

\begin{table}[h]
\centering
\caption{Per-class F1 score for building function classification (SG). Counts denote the number of test instances per class.}
\label{tab:building_per_class}
\scalebox{0.85}{
\begin{tabular}{lc|cc}
\toprule
Category & Count & CityFM & \modelname \\
\midrule
Residential & 8,645 & 96.03 & 96.09 \\
Industrial & 2,087 & 94.65 & 94.70 \\
Commercial & 1,038 & 87.07 & 87.96 \\
Comm. \& Res. & 329 & 46.29 & 53.69 \\
Educational & 286 & 73.00 & 69.23 \\
Civic \& Commun. Inst. & 241 & 24.92 & 51.47 \\
Sports \& Recr. & 151 & 69.76 & 77.58 \\
Transport & 103 & 61.53 & 73.47 \\
\bottomrule
\end{tabular}}
\end{table}

\subsection{Point Analysis: Performance by Check-in POI Type}\label{appdx:point}
We evaluate next POI prediction accuracy across the top $15$ venue types in the Foursquare-New York dataset (\Cref{tab:poi_per_type}). \modelname achieves the best Hit@1 scores in $11$ out of $15$ categories, outperforming POI-Enhancer ranging from ``Mexican Restaurants'' to ``Train Stations''. While POI-Enhancer remains competitive on categories such as ``Subways'' and ``Offices'' by leveraging rich textual descriptions, \modelname benefits from direct interaction with underlying OSM geo-entities.

\begin{table}[h]
\centering
\caption{Per-category Hit@1 and Hit@5 for next POI prediction (NY-CTLE). Counts denote the number of test instances per type.}
\label{tab:poi_per_type}
\scalebox{0.82}{
\begin{tabular}{lc|cc|cc}
\toprule
\multirow{2}{*}{Foursquare POI Type} & \multirow{2}{*}{Count} 
& \multicolumn{2}{c|}{\modelname} 
& \multicolumn{2}{c}{POI-Enhancer} \\
\cmidrule(lr){3-4} \cmidrule(lr){5-6}
& & Hit@1 & Hit@5 & Hit@1 & Hit@5 \\
\midrule
Subway               & 1,863 & 20.77 & 39.77 & 21.15 & 41.60 \\
Bar                  & 1,677 & 1.49  & 5.37  & 2.62  & 5.55  \\
Home (private)       & 1,630 & 22.94 & 43.37 & 22.39 & 43.37 \\
Office               & 1,340 & 23.66 & 37.16 & 25.45 & 37.99 \\
Coffee Shop          & 1,323 & 4.84  & 12.55 & 7.33  & 13.83 \\
Park                 & 1,158 & 9.50  & 16.75 & 8.89  & 16.41 \\
Neighborhood         &   876 & 11.07 & 21.35 & 7.76  & 22.60 \\
American Restaurant  &   819 & 2.56  & 7.45  & 2.20  & 6.72  \\
Hotel                &   801 & 11.24 & 18.23 & 10.24 & 20.35 \\
Mexican Restaurant   &   795 & 8.55  & 15.97 & 5.66  & 11.82 \\
Grocery Store        &   750 & 5.87  & 14.67 & 5.73  & 14.53 \\
Deli / Bodega        &   724 & 19.48 & 38.81 & 16.30 & 39.23 \\
Train Station        &   683 & 16.69 & 33.38 & 15.37 & 32.21 \\
Building             &   627 & 14.04 & 28.71 & 12.92 & 29.67 \\
Italian Restaurant   &   587 & 2.04  & 4.60  & 1.70  & 3.92  \\
\bottomrule
\end{tabular}}
\end{table}

\end{document}